\def\eqref#1{equation~\ref{#1}}
\def\1{\bm{1}}
\def\rvs{{\mathbf{s}}}
\def\va{{\bm{a}}}
\def\vb{{\bm{b}}}
\def\vc{{\bm{c}}}
\def\ve{{\bm{e}}}
\def\vg{{\bm{g}}}
\def\vv{{\bm{v}}}
\def\vw{{\bm{w}}}
\def\vx{{\bm{x}}}
\def\vy{{\bm{y}}}
\def\vz{{\bm{z}}}
\def\mA{{\bm{A}}}
\def\mC{{\bm{C}}}
\def\mG{{\bm{G}}}
\def\mI{{\bm{I}}}
\def\mL{{\bm{L}}}
\def\mQ{{\bm{Q}}}
\def\mR{{\bm{R}}}
\def\mU{{\bm{U}}}
\def\mV{{\bm{V}}}
\def\mW{{\bm{W}}}
\def\mX{{\bm{X}}}
\def\mY{{\bm{Y}}}
\def\mZ{{\bm{Z}}}
\def\mSigma{{\bm{\Sigma}}}
\DeclareMathAlphabet{\mathsfit}{\encodingdefault}{\sfdefault}{m}{sl}
\SetMathAlphabet{\mathsfit}{bold}{\encodingdefault}{\sfdefault}{bx}{n}
\newcommand{\tens}[1]{\bm{\mathsfit{#1}}}
\def\tA{{\tens{A}}}
\def\tW{{\tens{W}}}
\def\tX{{\tens{X}}}
\def\tZ{{\tens{Z}}}
\def\sF{{\mathbb{F}}}
\renewcommand\intercal{{\top}}
\newtheorem{theorem}{Theorem}
\def\vdz{{\bm{dz}}}
\begin{document}
\twocolumn[
\mlsystitle{Low-Rank Training of Deep Neural Networks \\ for Emerging Memory Technology}



\mlsyssetsymbol{equal}{*}

\begin{mlsysauthorlist}
\mlsysauthor{Albert Gural}{su}
\mlsysauthor{Phillip Nadeau}{adi}
\mlsysauthor{Mehul Tikekar}{adi}
\mlsysauthor{Boris Murmann}{su}
\end{mlsysauthorlist}

\mlsysaffiliation{su}{Department of Electrical Engineering, Stanford University, Stanford, USA}
\mlsysaffiliation{adi}{Analog Devices Incorporated, Norwood, Massachusetts, USA}

\mlsyscorrespondingauthor{Albert Gural}{agural@stanford.edu}

\mlsyskeywords{Machine Learning, MLSys}

\vskip 0.3in

\begin{abstract}
The recent success of neural networks for solving difficult decision tasks has incentivized incorporating smart decision making ``at the edge.'' However, this work has traditionally focused on neural network \textit{inference}, rather than \textit{training}, due to memory and compute limitations, especially in emerging non-volatile memory systems, where writes are energetically costly and reduce lifespan. Yet, the ability to train at the edge is becoming increasingly important as it enables real-time adaptability to device drift and environmental variation, user customization, and federated learning across devices. In this work, we address two key challenges for training on edge devices with non-volatile memory: low write density and low auxiliary memory. We present a low-rank training scheme that addresses these challenges while maintaining computational efficiency. We then demonstrate the technique on a representative convolutional neural network across several adaptation problems, where it out-performs standard SGD both in accuracy and in number of weight writes.
\end{abstract}
]



\printAffiliationsAndNotice{}  

\section{Introduction}\label{sec:introduction}
Deep neural networks have shown remarkable performance on a variety of challenging inference tasks. As the energy efficiency of deep-learning inference accelerators improves, some models are now being deployed directly to edge devices to take advantage of increased privacy, reduced network bandwidth, and lower inference latency. Despite edge deployment, training happens predominately in the cloud. This limits the privacy advantages of running models on-device and results in static models that do not adapt to evolving data distributions in the field.

Efforts aimed at on-device training address some of these challenges. Federated learning aims to keep data on-device by training models in a distributed fashion \citep{DBLP:journals/corr/KonecnyMRR16}. On-device model customization has been achieved by techniques such as weight-imprinting \citep{Qi_LowShotLearningImprinted_2018}, or by retraining limited sets of layers. On-chip training has also been demonstrated for handling hardware imperfections \citep{Zhang_InMemoryComputationMachineLearning_2017,Gonugondla_VariationTolerantInMemoryMachine_2018}. Despite this progress with small models, on-chip training of larger models is bottlenecked by the limited memory size and compute horsepower of edge processors.

Emerging non-volatile (NVM) memories such as resistive random access memory (RRAM) have shown great promise for energy and area-efficient inference \citep{yu2018neuro}. In Figure~\ref{fig:chip-evolution}, NVM used in solution C is able to solve the weight-movement energy drawbacks of traditional solution A while also alleviating the chip area drawbacks of solution B. The benefits offered by NVM for neural network inference suggest it may become an important component of future smart edge devices. However, while solution C offers advantages for inference, it can make training even more difficult. On-chip training requires a large number of writes to the memory, and RRAM writes cost significantly more energy than reads (e.g., 10.9 pJ/bit versus 1.76 pJ/bit \citep{Wu2019}). Additionally, RRAM endurance is on the order of 10$^6$ writes \citep{Grossi2019}, shortening the lifetime of a device due to memory writes for on-chip training. In anticipation of growing numbers of inference-optimized NVM-based edge devices, we ask what can be done to enable training as well.

In this paper, we present an online training scheme amenable to NVM memory solutions. Our contributions are (1) an algorithm called Low Rank Training (LRT), and its analysis, which addresses the two key challenges of low write density and low auxiliary memory; (2) two techniques ``gradient max-norm'' and ``streaming batch norm'' to help training specifically in the online setting; (3) a suite of adaptation experiments to demonstrate the advantages of our approach.

\begin{figure}[htb]
    \centering
    \includegraphics[width=0.9\linewidth]{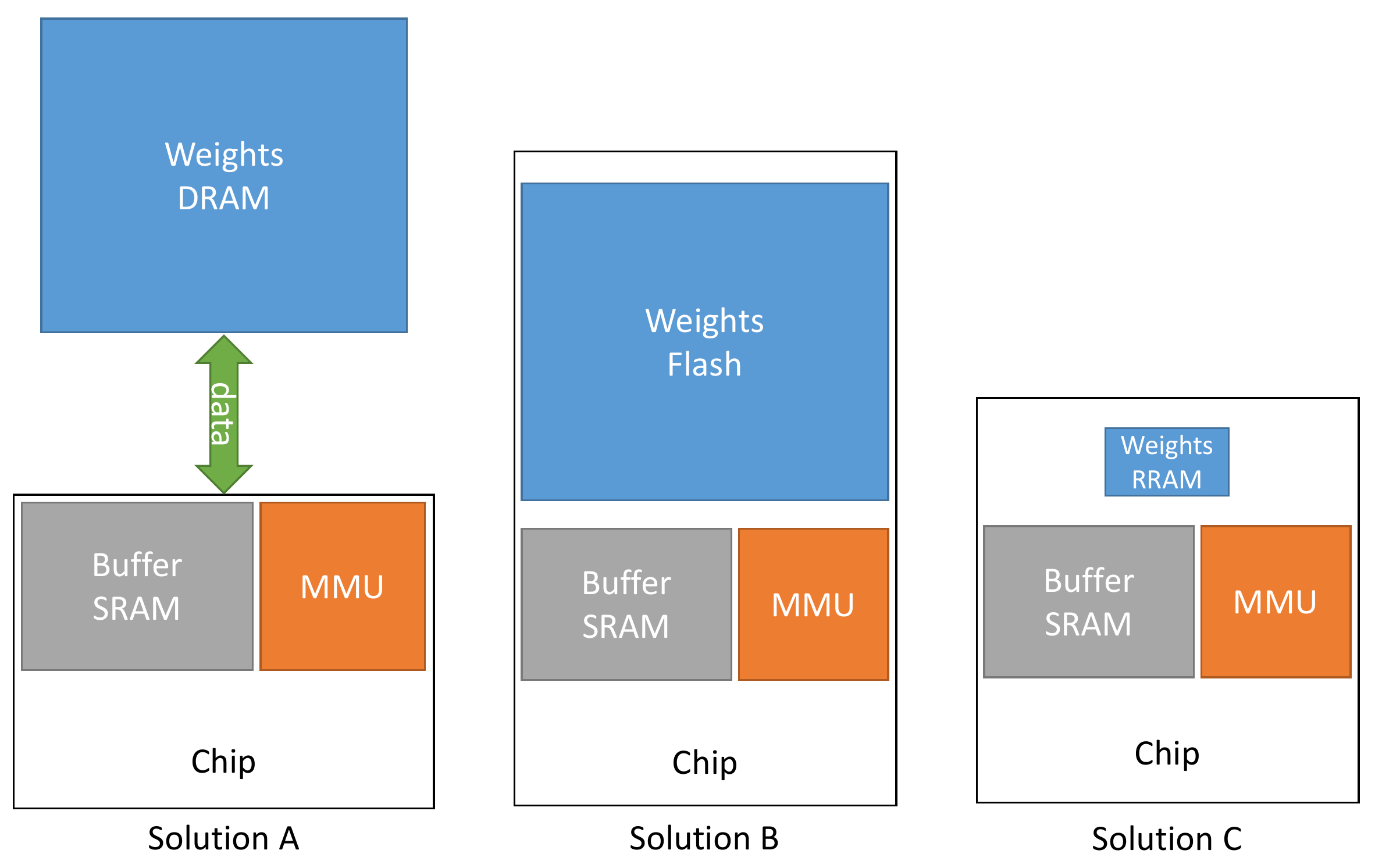}
    \caption{Three edge inference solutions are illustrated. Solution A is the approach used in modern devices. Large DNN memory is stored in off-chip DRAM, resulting in large weight movement energy costs. Moving weight memory on-chip reduces weight movement costs, but can increase chip area substantially, as shown in solution B. NVM, such as RRAM shown in solution C, is spatially dense and alleviates the challenges of solutions A and B.}
    \label{fig:chip-evolution}
\end{figure}





\section{Related Work}\label{sec:related}

\textbf{Efficient training for resistive arrays.} Several works have aimed at improving the efficiency of training algorithms on resistive arrays. Of the three weight-computations required in training (forward, backprop, and weight update), weight updates are the hardest to parallelize using the array structure. Stochastic weight updates \citep{Gokmen_AccelerationDeepNeural_2016} allow programming of all cells in a crossbar at once, as opposed to row/column-wise updating. Online Manhattan rule updating \citep{Zamanidoost_Manhattanruletraining_2015} can also be used to update all the weights at once. Several works have proposed new memory structures to improve the efficiency of training \citep{Soudry_MemristorBasedMultilayerNeural_2015,Ambrogio_EquivalentAccuracy_2018}. The number of writes has also been quantified in the context of chip-in-the-loop training \citep{Yu_Binaryneuralnetwork_2016}.



\textbf{Distributed gradient descent.} Distributed training in the data center is another problem that suffers from expensive weight updates. Here, the model is replicated onto many compute nodes and in each training iteration, the mini-batch is split across the nodes to compute gradients. The distributed gradients are then accumulated on a central node that computes the updated weights and broadcasts them. These systems can be limited by communication bandwidth, and compressed gradient techniques \citep{aji2017sparse} have therefore been developed. In \citet{lin2017deep}, the gradients are accumulated over multiple training iterations on each compute node and only gradients that exceed a threshold are communicated back to the central node. In the context of on-chip training with NVM, this method helps reduce the number of weight updates. However, the gradient accumulator requires as much memory as the weights themselves, which negates the density benefits of NVM.

\textbf{Low-Rank Training.} Our work draws heavily from previous low-rank training schemes that have largely been developed for use in recurrent neural networks to uncouple the training memory requirements from the number of time steps inherent to the standard truncated backpropagation through time (TBPTT) training algorithm. Algorithms developed since then to address the memory problem include Real-Time Recurrent Learning (RTRL) \citep{williams1989learning}, Unbiased Online Recurrent Optimization (UORO) \citep{tallec2017unbiased}, Kronecker Factored RTRL (KF-RTRL) \citep{mujika2018approximating}, and Optimal Kronecker Sums (OK) \citep{pmlr-v97-benzing19a}. These latter few techniques rely on the weight gradients in a weight-vector product looking like a sum of outer products (i.e., Kronecker sums) of input vectors with backpropagated errors. Instead of storing a growing number of these sums, they can be approximated with a low-rank representation involving fewer sums.

%



\section{Training Non-Volatile Memory}\label{sec:limitations}
\begin{figure*}[htb]
    \centering
    \includegraphics[width=1.0\linewidth]{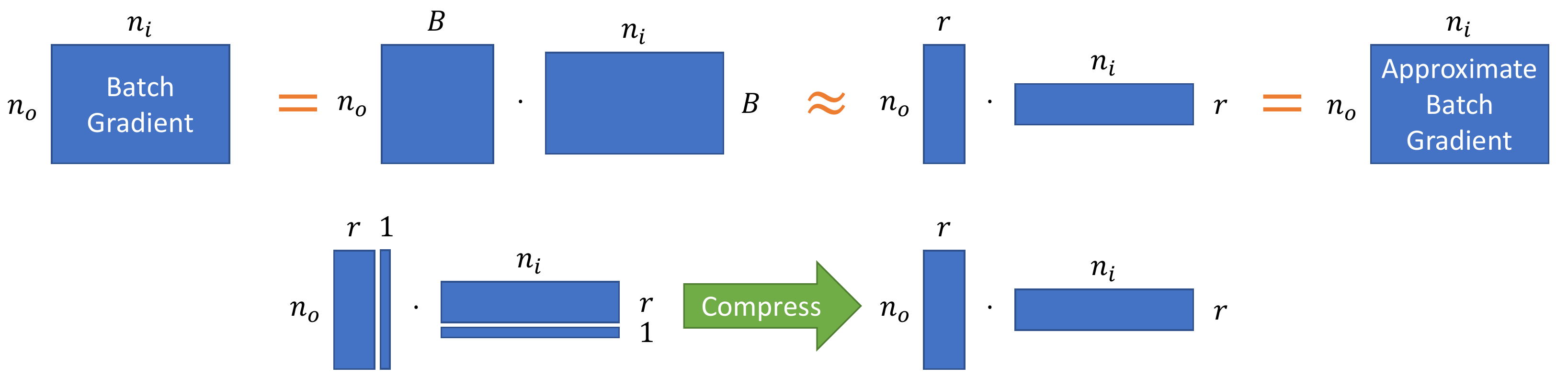}
    \caption{A batch of $B$ samples is collected in two matrices of size $n_o \times B$ and $n_i \times B$. Their product gives the batch gradient of size $n_o \times n_i$, indicated by the top left equality. A best $r$-rank approximation can be thought of as a representation of the $B$ samples in only $r < B$ ``compressed'' samples, as shown in the top middle approximation, leading to an approximate gradient. To truly save memory, this process must be iterated after each new sample so the total number of compressed samples that must be stored never exceeds $r+1$ as illustrated in the bottom process.}
    \label{fig:batch-compression}
\end{figure*}

The meat of most deep learning systems are many \textbf{w}eight matrix - \textbf{a}ctivation vector products $\mW \cdot \va$. Fully-connected (dense) layers use them explicitly: $\va^{[\ell]} = \sigma\left(\mW^{[\ell]} \va^{[\ell-1]} + \vb^{[\ell]}\right)$ for layer $\ell$, where $\sigma$ is a non-linear activation function (more details are discussed in detail in Appendix~\ref{sec:impl-dense}). Recurrent neural networks use one or many matrix-vector products per recurrent cell. Convolutional layers can also be interpreted in terms of matrix-vector products by unrolling the input feature map into strided convolution-kernel-size slices. Then, each matrix-vector product takes one such input slice and maps it to all channels of the corresponding output pixel (more details are discussed in Appendix~\ref{sec:impl-conv}).

The ubiquity of matrix-vector products allows us to adapt the techniques discussed in ``Low-Rank Training'' of Section~\ref{sec:related} to other network architectures. Instead of reducing the memory across time steps, we can reduce the memory across training samples in the case of a traditional feedforward neural network. However, in traditional training (e.g., on a GPU), this technique does not confer advantages. Traditional training platforms often have ample memory to store a batch of activations and backpropagated gradients, and the weight updates $\Delta \mW$ can be applied directly to the weights $\mW$ once they are computed, allowing temporary activation memory to be deleted. The benefits of low-rank training only become apparent when looking at the challenges of proposed NVM devices:

\textbf{Low write density (LWD).} In NVM, writing to weights at every sample is costly in energy, time, and endurance. These concerns are exacerbated in multilevel cells, which require several steps of an iterative write-verify cycle to program the desired level. We therefore want to minimize the number of writes to NVM.

\textbf{Low auxiliary memory (LAM).} NVM is the densest form of memory. In 40nm technology, RRAM 1T-1R bitcells @ 0.085 um$^2$ \citep{Chou_N40256K44_2018} are 2.8x smaller than 6T SRAM cells @ 0.242 um$^2$ \citep{TSMC_40nmTechnology_web}. Therefore, NVM should be used to store the memory-intensive weights. By the same token, no other on-chip memory should come close to the size of the on-chip NVM.  In particular, if our $b-$bit NVM stores a weight matrix of size $n_o \times n_i$, we should use at most $r(n_i+n_o)b$ auxiliary non-NVM memory, where $r$ is a small constant. Despite these space limitations, the reason we might opt to use auxiliary (large, high endurance, low energy) memory is because there are places where writes are frequent, violating LWD if we were to use NVM.

In the traditional minibatch SGD setting with batch size $B$, an upper limit on the write density per cell per sample is easily seen: $1/B$. However, to store such a batch of updates without intermediate writes to NVM would require auxiliary memory proportional to $B$. Therefore, a trade-off becomes apparent. If $B$ is reduced, LAM is satisfied at the cost of LWD. If $B$ is raised, LWD is satisfied at the cost of LAM. Using low-rank training techniques, the auxiliary memory requirements are decoupled from the batch size, allowing us to increase $B$ while satisfying both LWD and LAM\footnote{This can alternately be achieved by sub-sampling the training data by $r/B$ where $r$ is the OK rank. The purpose of using a low-rank estimate is that for the same memory cost, it is significantly more informational than the sub-sampled data, allowing for faster training convergence.}. Additionally, because the low-rank representation uses so little memory, a larger bitwidth can be used, potentially allowing for gradient accumulation in a way that is not possible with low bitwidth NVM weights. 

Figure \ref{fig:algo-compare} illustrates how typical learning algorithms exhibit strong coupling between the number of writes and the amount of auxiliary memory. In contrast, LRT aims to decouple these, achieving the low writes of large batch training methods with the low memory of small batch training methods. In the next section, we elaborate on the low-rank training method.

\begin{figure}[htb]
    \centering
    \includegraphics[width=1.0\linewidth]{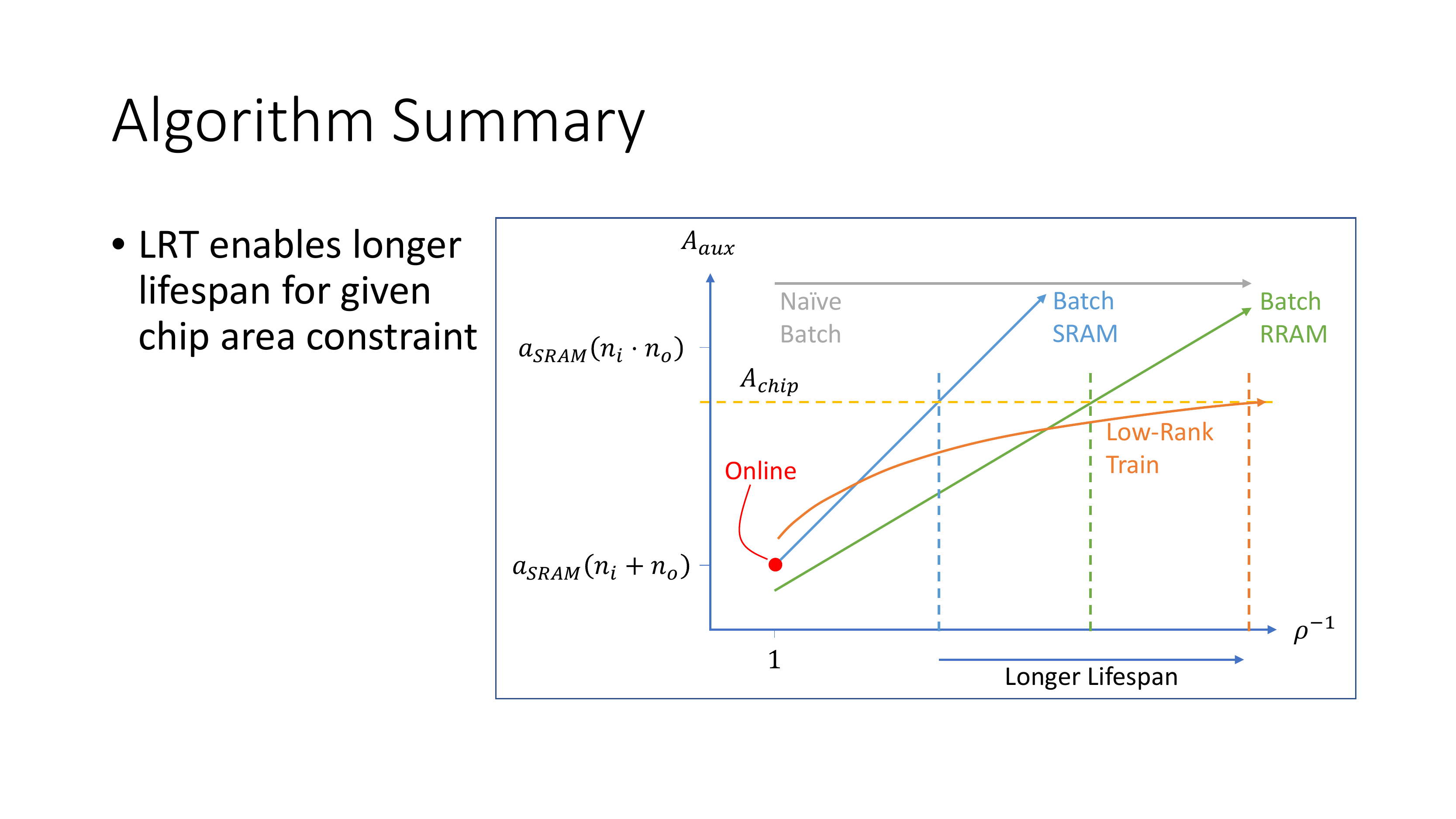}
    \caption{Five algorithms are plotted in auxiliary area versus inverse write density $\rho^{-1}$, where $\rho$ is the number of writes per RRAM weight cell per training sample. Our proposed algorithm is shown in orange. Naive batch (gray) uses the SRAM as an accumulator to store full weight gradients and therefore exceeds the chip size, no matter how large the batch is. Batch SRAM/RRAM (blue/green) store the individual samples in SRAM or RRAM, respectively and therefore have a batch-dependent area and frequency of writes to weights. Online (red) is the special case when batch size is 1.}
    \label{fig:algo-compare}
\end{figure}

\section{Low-Rank Training Method}\label{sec:ok}
\begin{figure*}[htb]
    \centering
    \includegraphics[width=1.0\linewidth]{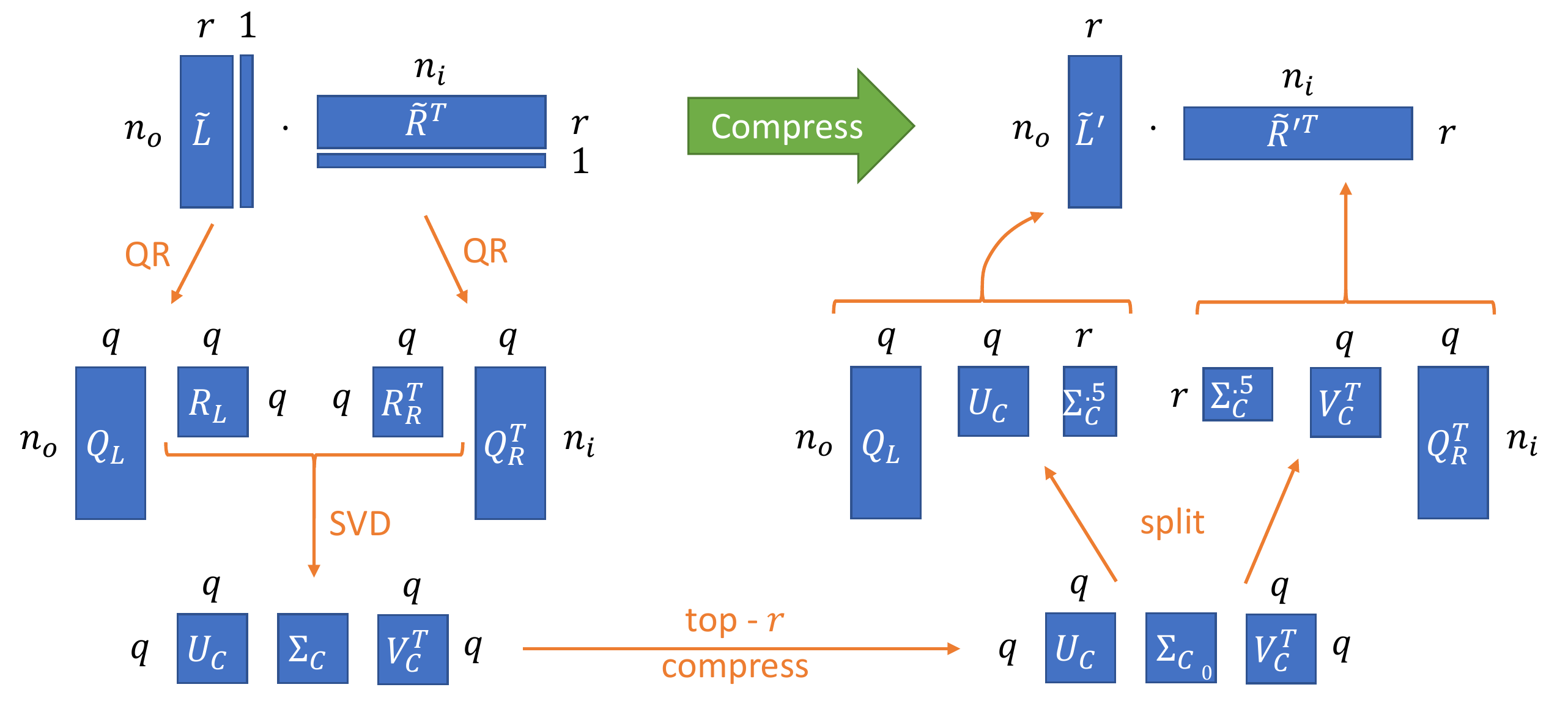}
    \caption{The core of low-rank training is an efficient method of computing the SVD of a batch. For the product of two long, skinny matrices $\tilde{\mL}$ and $\tilde{\mR}$, an efficient SVD can be computed by first decomposing them with a QR factorization, then running an SVD on just the small matrix $\mR_L \mR_R^\intercal$.}
    \label{fig:lrt-overview}
\end{figure*}

Let $\vz^{(i)} = \mW \va^{(i)} + \vb$ be the standard affine transformation building block of some larger network, e.g., $\vy_p^{(i)} = f_{post}(\vz^{(i)})$ and $\va^{(i)} = f_{pre}(\vx^{(i)})$ with prediction loss $\mathcal{L}(\vy_p^{(i)}, \vy_t^{(i)})$, where $(\vx^{(i)}, \vy_t^{(i)})$ is the $i^\text{th}$ training sample pair. Then weight gradient $\nabla_{\mW} \mathcal{L}^{(i)} = \vdz^{(i)} \left(\va^{(i)}\right)^\intercal = \vdz^{(i)} \otimes \va^{(i)}$ where $\vdz^{(i)} = \nabla_{\vz^{(i)}} \mathcal{L}^{(i)}$. A minibatch SGD weight update accumulates this gradient over $B$ samples: $\Delta \mW = -\eta \sum_{i=1}^B \vdz^{(i)} \otimes \va^{(i)}$ for learning rate $\eta$.

For a rank-$r$ training scheme, approximate the sum $\sum_{i=1}^B \vdz^{(i)} \otimes \va^{(i)}$ by iteratively updating two rank-$r$ matrices $\tilde{\mL} \in \mathbb{R}^{n_o \times r}, \tilde{\mR} \in \mathbb{R}^{n_i \times r}$ with each new outer product: $\tilde{\mL} \tilde{\mR}^\intercal \leftarrow rankReduce(\tilde{\mL} \tilde{\mR}^\intercal + \vdz^{(i)} \otimes \va^{(i)})$. Therefore, at each sample, we convert the rank-$q = r+1$ system $\tilde{\mL} \tilde{\mR}^\intercal + \vdz^{(i)} \otimes \va^{(i)}$ into the rank-$r$ $\tilde{\mL} \tilde{\mR}^\intercal$. This process is illustrated in Figure \ref{fig:batch-compression}. In the next sections, we discuss how to compute $rankReduce$.

\subsection{Optimal Kronecker Sum Approximation (OK)}\label{sec:ok-method}
One option for $rankReduce(\mX)$ to convert from rank $q=r+1$ $\mX$ to rank $r$ is a minimum L2 error estimator, which is implemented by selecting the top $r$ components of a singular value decomposition (SVD) of $\mX$. However, a na\"ive implementation is computationally infeasible and biased: $\mathbb{E}[rankReduce(\mX)] \neq \mX$. \citet{pmlr-v97-benzing19a} solves these problems by proposing a minimum variance unbiased estimator for $rankReduce$, which they call the OK algorithm\footnote{Their target application differs slightly in that they handle matrix - vector Kronecker sums rather than vector - vector Kronecker sums.}.

The OK algorithm can be understood in two key steps: first, an efficient method of computing the SVD of a Kronecker sum; second, a method of splitting the singular value matrix $\mSigma$ into two rank-$r$ matrices whose outer product is a minimum-variance, unbiased estimate of $\mSigma$. Details can be found in their paper, however we include a high-level explanation in Sections \ref{sec:ok-svd} and \ref{sec:ok-sigma} to aid our later discussions. Note that our variable notation differs from \citet{pmlr-v97-benzing19a}.

\subsubsection{Efficient SVD of Kronecker Sums}\label{sec:ok-svd}
Figure \ref{fig:lrt-overview} depicts the key operations in this step. Let $\mL = [\tilde{\mL}, \vdz^{(i)}]$ and $\mR = [\tilde{\mR}, \va^{(i)}]$ so that $\mL \mR^\intercal = \tilde{\mL} \tilde{\mR}^\intercal + \vdz^{(i)} \otimes \va^{(i)}$. Recall that $rankReduce$ should turn rank-$q$ $\mL \mR^\intercal$ into an updated rank-$r$ $\tilde{\mL} \tilde{\mR}^\intercal$.

QR-factorize $\mL = \mQ_L \mR_L$ and $\mR = \mQ_R \mR_R$ where $\mQ_L \in \mathbb{R}^{n_o \times q}, \mQ_R \in \mathbb{R}^{n_i \times q}$ are orthogonal so that $\mL \mR^\intercal = \mQ_L (\mR_L \mR_R^\intercal) \mQ_R^\intercal$. 
Let $\mC = \mR_L \mR_R^\intercal \in \mathbb{R}^{q \times q}$. Then we can find the SVD of $\mC = \mU_C \mSigma \mV_C^\intercal$ in $\mathcal{O}(q^3)$ time \citep{cline2006computation}, making it computationally feasible on small devices. Now we have:

\begin{equation}
    \mL \mR^\intercal = \mQ_L (\mU_C \mSigma \mV_C^\intercal) \mQ_R^\intercal = (\mQ_L \mU_C) \mSigma (\mQ_R \mV_C)^\intercal \label{eqn:eff-svd}
\end{equation}

which gives the SVD of $\mL \mR^\intercal$ since $\mQ_L \mU_C$ and $\mQ_R \mV_C$ are orthogonal and $\mSigma$ is diagonal. This SVD computation has a time complexity of $\mathcal{O}((n_i+n_o+q)q^2)$ and a space complexity of $\mathcal{O}((n_i+n_o+q)q)$.

\subsubsection{Minimum Variance, Unbiased Estimate of \texorpdfstring{$\mSigma$}{Sigma}}\label{sec:ok-sigma}
In \citet{pmlr-v97-benzing19a}, it is shown that the problem of finding a rank-$r$ minimum variance unbiased estimator of $\mL\mR^\intercal$ can be reduced to the problem of finding a rank-$r$ minimum variance unbiased estimator of $\mSigma$ and plugging it in to (\ref{eqn:eff-svd}).

Further, it is shown that such an optimal approximator for $\mSigma = \text{diag}(\sigma_1, \sigma_2, \ldots, \sigma_q)$, where $\sigma_1 \geq \sigma_2 \geq \cdots \geq \sigma_q$ will involve keeping the $m-1$ largest singular values and mixing the smaller singular values $\sigma_m, \ldots, \sigma_q$ within their $(k+1)\times(k+1)$ submatrix with $m, k$ defined below. Let:

\begin{align*}
    m = \text{min} \;\; i \;\; \text{s.t.} \;\; (q - i)\sigma_i \leq \sum\limits_{j=i}^q \sigma_j &&
    k = q - m \\
    \vx_0 = \left( \sqrt{1 - \frac{\sigma_m k}{s_1}}, \ldots, \sqrt{1 - \frac{\sigma_q k}{s_1}} \right)^\intercal &&
    s_1 = \sum\limits_{i=m}^q \sigma_i
\end{align*}

Note that $||\vx_0||_2 = 1$. Let $\mX \in \mathbb{R}^{(k+1)\times(k)}$ be orthogonal such that its left nullspace is the span of $\vx_0$. Then $\mX \mX^\intercal = I - \vx_0  \vx_0^\intercal$. Now, let $\rvs \in \{-1,1\}^{(k+1)\times 1}$ be uniform random signs and define:

\begin{align}
    \mX_s &= \left( \rvs \odot \mX_{:, 1}, \ldots, \rvs \odot \mX_{:, k} \right) \nonumber \\
    \mZ &= \sqrt{\frac{s_1}{k}} \cdot \mX_s \nonumber \\
    \tilde{\mSigma}_L &= \tilde{\mSigma}_R = \text{diag}\left(\sqrt{\sigma_1}, \ldots, \sqrt{\sigma_{m-1}}, \mZ\right) \label{eqn:sigmalr}
\end{align}

where $\odot$ is an element-wise product. Then $\tilde{\mSigma}_L  \tilde{\mSigma}_R^\intercal = \tilde{\mSigma}$ is a minimum variance, unbiased\footnote{The fact that it is unbiased: $\mathbb{E}[\tilde{\mSigma}] = \mSigma$ can be easily verified.} rank-$r$ approximation of $\mSigma$. Plugging $\tilde{\mSigma}$ into (\ref{eqn:eff-svd}),



\begin{align}
    \mL \mR^{\intercal} &= (\mQ_L \mU_C) \mSigma (\mQ_R \mV_C)^\intercal \nonumber \\
    &\approx (\mQ_L \mU_C) \tilde{\mSigma} (\mQ_R \mV_C)^\intercal \nonumber \\
    &= (\mQ_L \mU_C \tilde{\mSigma}_L) (\mQ_R \mV_C \tilde{\mSigma}_R)^\intercal \label{eqn:ok}
\end{align}

Thus, $\tilde{\mL} = \mQ_L \mU_C \tilde{\mSigma}_L \in \mathbb{R}^{n_o \times r}$ and $\tilde{\mR} = \mQ_R \mV_C \tilde{\mSigma}_R \in \mathbb{R}^{n_i \times r}$ gives us a minimum variance, unbiased, rank-$r$ approximation $\tilde{\mL}  \tilde{\mR}^\intercal$.

\subsection{Low Rank Training (LRT)}\label{sec:lrt}
Although the standalone OK algorithm presented by \citet{pmlr-v97-benzing19a} has good asymptotic computational complexity, our vector-vector outer product sum use case permits further optimizations. In this section we present these optimizations, and the explicit implementation called Low Rank Training (LRT) in Algorithm~\ref{alg:lrt}.

\begin{algorithm}[!htb]
   \caption{Low Rank Training}
   \label{alg:lrt}
\begin{algorithmic}
   \STATE {\bfseries State:} $\mQ_L \in \mathbb{R}^{n_o \times q}$; $\mQ_R \in \mathbb{R}^{n_i \times q}$; $\vc_x \in \mathbb{R}^{q \times 1}$
   \STATE {\bfseries Input:} $\vdz^{(i)} \in \mathbb{R}^{n_o \times 1}$; $\va^{(i)} \in \mathbb{R}^{n_i \times 1}$ for $i \in [1,B]$
   
   \FOR{$i = 1 \ldots B$}
       \STATE \COMMENT{Modified Gram-Schmidt.}
       \STATE $\vc_L, \vc_R \leftarrow 0^{q \times 1}$
       \FOR{$j = 1 \ldots r$}
            \STATE $c_{L,j} \leftarrow \mQ_{L,j} \cdot \vdz^{(i)}; \;\;\;\; \vdz^{(i)} \leftarrow \vdz^{(i)} - c_{L,j} \cdot \mQ_{L,j}$
            \STATE $c_{R,j} \leftarrow \mQ_{R,j} \cdot \va^{(i)}; \;\;\;\; \va^{(i)} \leftarrow \va^{(i)} - c_{L,j} \cdot \mQ_{L,j}$
       \ENDFOR
       \STATE $c_{L,q} \leftarrow ||\vdz^{(i)}||; \;\;\;\; \mQ_{L,q} \leftarrow \vdz^{(i)} / c_{L,q}$
       \STATE $c_{R,q} \leftarrow ||\va^{(i)}||; \;\;\;\; \mQ_{R,q} \leftarrow \va^{(i)} / c_{R,q}$
       
       \STATE \COMMENT{Generate $\mC$ and find its SVD.}
       \STATE $\mC \leftarrow \vc_L  \vc_R^\intercal + \text{diag}(\vc_x)$
       \STATE $\mU_C \cdot \text{diag}(\bm{\sigma}) \cdot \mV_C^\intercal \leftarrow \text{SVD}(\mC)$
       
       \STATE \COMMENT{Minimum-variance unbiased estimator for $\mSigma$.}
       \STATE $m \leftarrow \text{min} \;\; j \;\; \text{s.t.} \;\; (q-j)\sigma_j \leq \sum_{\ell=j}^q \sigma_\ell$
       \STATE $s_1 \leftarrow \sum\limits_{i=m}^q \sigma_i, \;\;\;\; k \leftarrow q - m$
       \STATE $\vv \leftarrow \sqrt{1 - k / s_1 \cdot \bm{\sigma}_{[m:]}} - \ve^{(1)}$
       \STATE $\rvs \leftarrow \{-1,1\}^{(k+1)\times 1}$ \COMMENT{Ind. uniform random signs.}
       \STATE $\mX_s \leftarrow \left(\mI + (\rvs \odot \vv)  (\vv / v_1)^\intercal \right)_{[2:]}$ \COMMENT{Householder.}
       
       \STATE \COMMENT{QR-factorization of $\tilde{\mSigma}_L$.}
       \STATE $\mQ_x \leftarrow \begin{bmatrix} \mI & 0 \\ 0 & \mX_s \end{bmatrix} \in \mathbb{R}^{q \times r}$
       \STATE $\vc_x \leftarrow (\sigma_1, \ldots, \sigma_{m-1}, \underbrace{s_1/k, \ldots, s_1/k}_{q-m+1 \;\; \text{times}})$
       
       \STATE \COMMENT{Update the first $r$ columns of $\mQ_L, \mQ_R$.}
       \STATE $\mQ_{L[:r]} \leftarrow \mQ_L \cdot \mU_C \cdot \mQ_x$
       \STATE $\mQ_{R[:r]} \leftarrow \mQ_R \cdot \mV_C \cdot \mQ_x$
    \ENDFOR
    
    \STATE \COMMENT{Compute final $\tilde{\mL}, \tilde{\mR}$ where $\nabla_{\mW}{\mathcal{L}} \approx \tilde{\mL}  \tilde{\mR}^\intercal$.}
    \STATE $\tilde{\mL} \leftarrow \left(\mQ_L \cdot \text{diag}(\sqrt{\vc_x})\right)_{[:r]}$
    \STATE $\tilde{\mR} \leftarrow \left(\mQ_R \cdot \text{diag}(\sqrt{\vc_x})\right)_{[:r]}$
\end{algorithmic}
\end{algorithm}

\subsubsection{Maintain Orthogonal \texorpdfstring{$\mQ_L, \mQ_R$}{QL, QR}}\label{sec:lrt-orth}
The main optimization is a method of avoiding recomputing the QR factorization of $\mL$ and $\mR$ at every step. Instead, we keep track of orthogonal matrices $\mQ_L, \mQ_R$, and weightings $\vc_x$ such that $\tilde{\mL} = \mQ_L \cdot \text{diag}(\sqrt{\vc_x})_{[:r]}$ and $\tilde{\mR} = \mQ_R \cdot \text{diag}(\sqrt{\vc_x})_{[:r]}$. Upon receiving a new sample, a single inner loop of the numerically-stable modified Gram-Schmidt (MGS) algorithm \citep{bjorck1967solving} can be used to update $\mQ_L$ and $\mQ_R$. The orthogonal basis coefficients $\vc_L = \mQ_L^\intercal  \vdz^{(i)}$ and $\vc_R = \mQ_R^\intercal  \va^{(i)}$ computed during MGS can be used to find the new value of $\mC = \vc_L  \vc_R^\intercal + \text{diag}(\vc_x)$.

After computing $\tilde{\mSigma}_L = \tilde{\mSigma}_R$ in (\ref{eqn:sigmalr}), we can orthogonalize these matrices into $\tilde{\mSigma}_L = \tilde{\mSigma}_R = \mQ_x \mR_x$. Then from (\ref{eqn:ok}), we have $\tilde{\mL}\tilde{\mR}^\intercal = (\mQ_L \mU_C \mQ_x) (\mR_x \mR_x^\intercal) (\mQ_R \mV_C \mQ_x)^\intercal$. With this formulation, we can maintain orthogonality in $\mQ_L, \mQ_R$ by setting:

\begin{align*}
    \mQ_L &\leftarrow \mQ_L \mU_C \mQ_x \\
    \mQ_R &\leftarrow \mQ_R \mV_C \mQ_x \\
    \vc_x &\leftarrow \text{diag}(\mR_x \mR_x^\intercal)
\end{align*}

These matrix multiplies require $\mathcal{O}((n_i+n_o)q^2)$ multiplications, so this optimization does not improve asymptotic complexity bounds. This optimization may nonetheless be practically significant since matrix multiplies are easy to parallelize and would typically not be the bottleneck of the computation compared to Gram-Schmidt. The next section discusses how to orthogonalize $\tilde{\mSigma}_L$ efficiently and why $(\mR_x \mR_x^\intercal)$ is diagonal.

\subsubsection{Orthogonalization of \texorpdfstring{$\tilde{\mSigma}_L$}{SigmaL}}\label{sec:lrt-orth-sigma}
Orthogonalization of $\tilde{\mSigma}_L$ is relatively straightforward. From (\ref{eqn:sigmalr}), the columns of $\tilde{\mSigma}_L$ are orthogonal since $\mZ$ is orthogonal. However, they do not have unit norm. We can therefore pull out the norm into a separate diagonal matrix $\mR_x$ with diagonal elements $\sqrt{\vc_x}$:

\begin{align*}
    \mQ_x &= \begin{bmatrix} \mI_{m-1} & 0 \\ 0 & \mX_s \end{bmatrix} \\
    \sqrt{\vc_x} &= (\sqrt{\sigma_1}, \ldots, \sqrt{\sigma_{m-1}}, \underbrace{\sqrt{s_1/k}}_{q-m+1\;\; \text{times}})
\end{align*}

\begin{figure*}[htb]
    \centering
    \subfloat[True gradients with artificial noise]{{\includegraphics[width=0.5\linewidth]{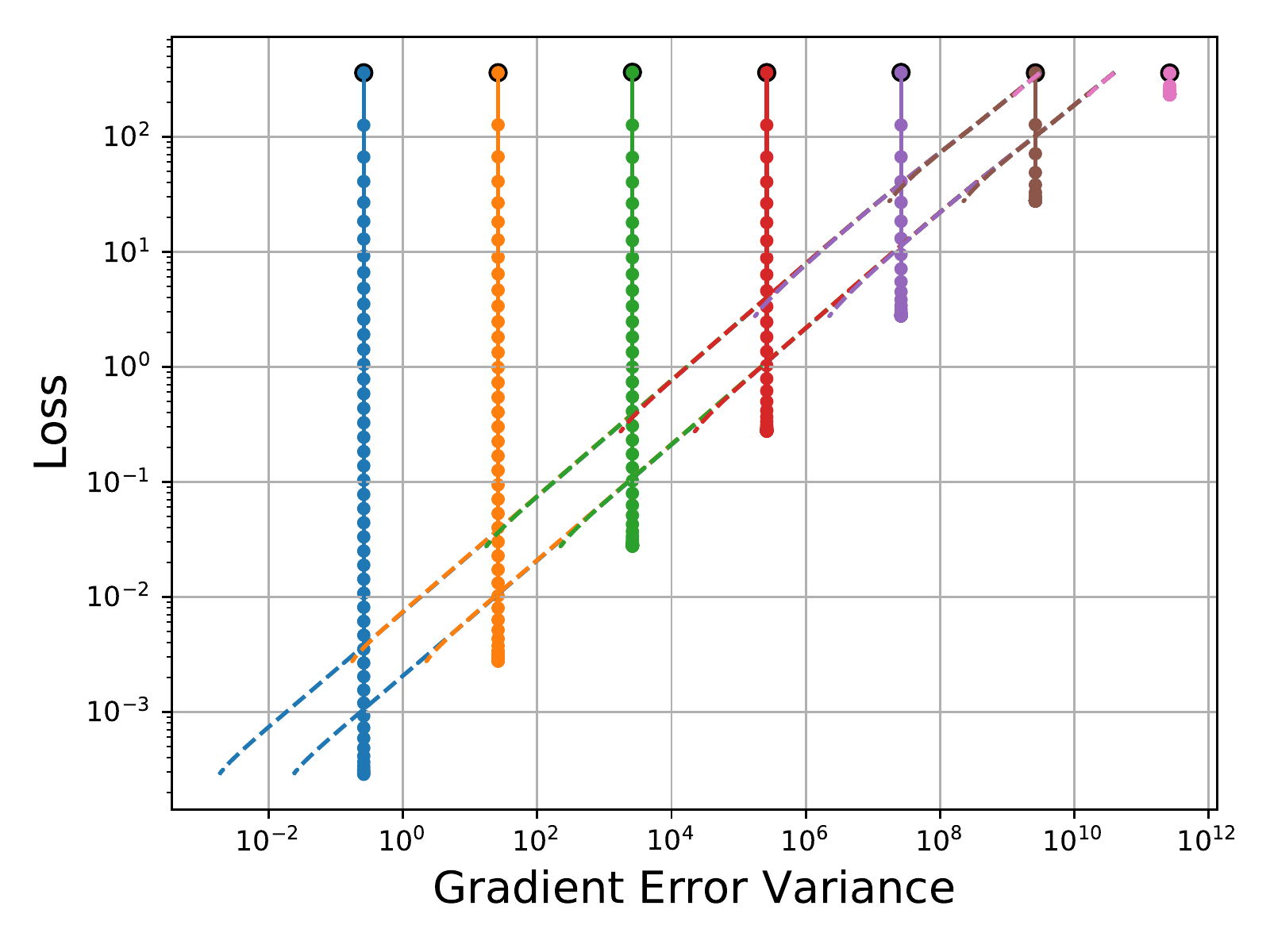}}}
    \subfloat[Biased (bLRT) and unbiased (uLRT) gradients over learning rates]{{\includegraphics[width=0.5\linewidth]{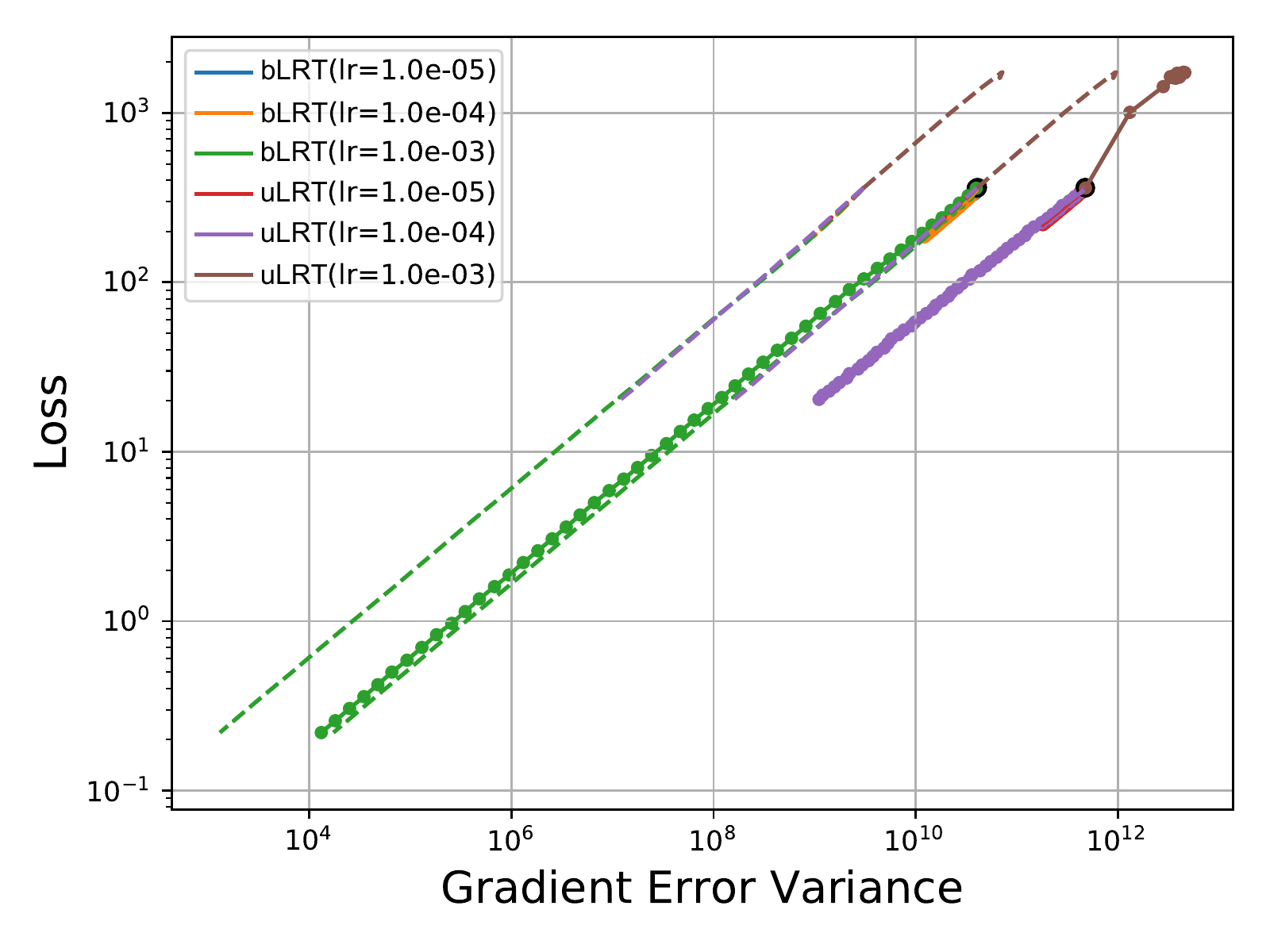} }}%
    \caption{In both plots, the solid line with markers plots the loss vs. gradient error variance (LHS of (\ref{eq:cvx-error-constraint})) across 50 steps of SGD for several different setups. The left dashed line represents the RHS of (\ref{eq:cvx-error-constraint}) and the right dashed line is the RHS with $C$ instead of $c$.}
    \label{fig:lin-reg}
\end{figure*}

\subsubsection{Finding Orthonormal Basis \texorpdfstring{$\mX$}{X}}\label{sec:lrt-orth-basis-X}
We generated $\mX$ by finding an orthonormal basis that was orthogonal to a vector $\vx_0$ so that we could have $\mX \mX^\intercal = I - \vx_0  \vx_0^\intercal$. An efficient method of producing this basis is through Householder matrices $(\vx_0, \mX) = \mI - 2 \; \vv  \vv^\intercal / ||\vv||^2$ where $\vv = \vx_0 - \ve^{(1)}$ and $(\vx_0, \mX)$ is a $k+1 \times k+1$ matrix with first column $\vx_0$ and remaining columns $\mX$ \citep{householder1958unitary, 525587}.

\subsubsection{Efficiency Comparisons to Standard Approach}\label{sec:lrt-eff}
The OK/LRT methods require $\mathcal{O}((n_i+n_o+q)q^2)$ operations per sample and $\mathcal{O}(n_i n_o q)$ operations after collecting $B$ samples, giving an amortized cost of $\mathcal{O}((n_i+n_o+q)q^2 + n_i n_o q/B)$ operations per sample. Meanwhile, a standard approach expands the Kronecker sum at each sample, costing $\mathcal{O}(n_i n_o)$ operations per sample. If $q \ll B,n_i,n_o$ then the low rank method is superior to minibatch SGD in both memory and computational cost.

\subsubsection{LRT Variants}\label{sec:lrt-var}
In this paper, we compare two variants of the LRT algorithm. The first one, biased LRT, is a version that compresses the $q$-rank representation to rank $r$ by taking the top $r$ singular values of the SVD with no mixing. The second one, unbiased LRT, is a version that follows the OK algorithm and includes mixing for minimum variance, unbiased estimates as discussed in Section~\ref{sec:ok-sigma}. While the unbiased version does not add much computational complexity, it does require access to random bits and introduces more variance into the gradient estimates. These drawbacks must be traded off with the benefits of having an unbiased estimator.


\section{Convex Convergence}\label{sec:convex-convergence}

LRT introduces variance into the gradient estimates, so here we analyze the implications for online convex convergence. We analyze the case of strongly convex loss landscapes $f^t(\vw^t)$ for flattened weight vector $\vw^t$ and online sample $t$. In Appendix~\ref{sec:cvx-converge}, we show that with inverse squareroot learning rate, when the loss landscape Hessians satisfy $0 \prec c \mI \preceq \nabla^2 f^t(\vw^t)$ and under constraint (\ref{eq:cvx-error-constraint}) for the size of gradient errors $\bm{\varepsilon}^t$, where $\vw^{*}$ is the optimal offline weight vector, the online regret (\ref{eq:regret}) is sublinear in the number of online steps $T$. We can approximate $||\bm{\varepsilon}||$ and show that convex convergence is likely when (\ref{eq:cvx-var-constraint-biased}) is satisfied for biased LRT, or when (\ref{eq:cvx-var-constraint-unbiased}) is satisfied for unbiased LRT.


\begin{align}
    ||\bm{\varepsilon}^t|| &\leq \frac{c}{2} ||\vw^t - \vw^{*}|| \label{eq:cvx-error-constraint} \\
    R(T) &= \sum_{t=1}^T f^t(\vw^t) - \sum_{t=1}^T f^t(\vw^{*}) \label{eq:regret} \\
    \sum\limits_{i=1}^B \left(\sigma_q^{(t,i)}\right)^2 &\leq \frac{c^2}{4} ||\vw^t - \vw^{*}||^2 \label{eq:cvx-var-constraint-biased} \\
    \sum\limits_{i=1}^B \sigma_r^{(t,i)} \sigma_q^{(t,i)} &\leq \frac{c^2}{8} ||\vw^t - \vw^{*}||^2 \label{eq:cvx-var-constraint-unbiased}
\end{align}

Equations (\ref{eq:cvx-var-constraint-biased}, \ref{eq:cvx-var-constraint-unbiased}) suggest conditions under which fast convergence may be more or less likely and also point to methods for improving convergence. We discuss these in more detail in Appendix~\ref{sec:cvx-disc}.

\subsection{Convergence Experiments}\label{sec:lin-reg}
We validate (\ref{eq:cvx-error-constraint}) with several linear regression experiments on a static input batch $\mX \in \mathbb{R}^{1024\times 100}$ and target $\mY_t \in \mathbb{R}^{256 \times 100}$.
In Figure \ref{fig:lin-reg}(a), Gaussian noise at different strengths (represented by different colors) is added to the true batch gradients at each update step. Notice that convergence slows significantly to the right of the dashed lines, which is the region where (\ref{eq:cvx-error-constraint}) no longer holds\footnote{As discussed in Appendix~\ref{sec:rank-def}, $B < n_i$, so we substitute $c$ in $c/2 ||\vw^t - \vw^{*}||$ with the minimum non-zero Eigenvalue of the Hessian $\tilde{c}$ when plotting the RHS of (\ref{eq:cvx-error-constraint}).}.

In Figure \ref{fig:lin-reg}(b), we validate Equations (\ref{eq:cvx-error-constraint}, \ref{eq:cvx-var-constraint-biased}, \ref{eq:cvx-var-constraint-unbiased}) by testing the biased and unbiased LRT cases with rank $r=10$. In these particular experiments, unbiased LRT adds too much variance, causing it to operate to the right of the dashed lines. However, both biased and unbiased LRT can be seen to reduce their variance as training progresses. In the case of biased LRT, it is able to continue training as it tracks the right dashed line.


\section{Implementation Details}\label{sec:implementation}


\begin{figure*}[htb]
    \centering
    \subfloat[Original Dataset]{{\includegraphics[width=0.45\linewidth]{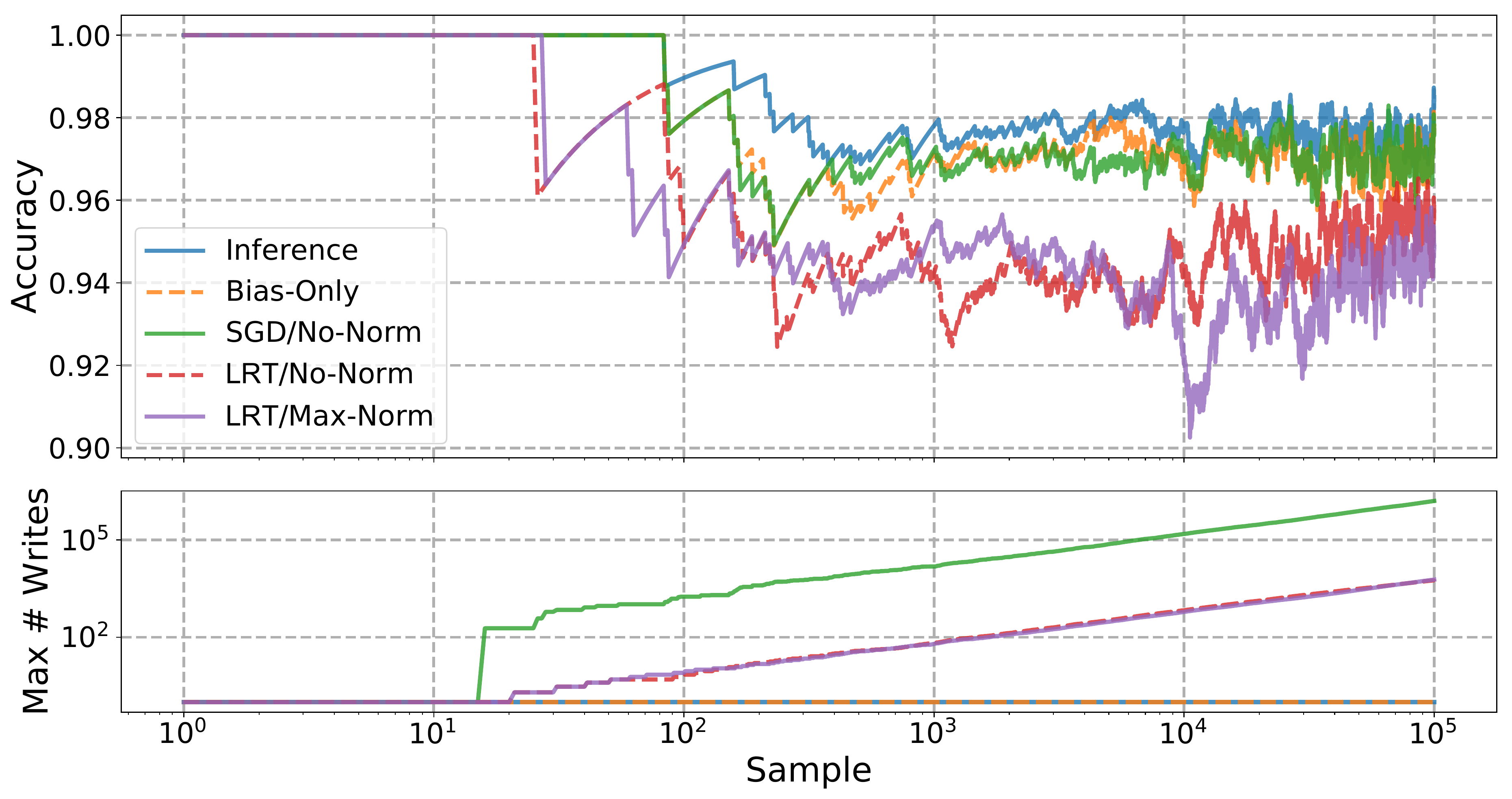} }}%
    \qquad
    \subfloat[Distribution Shifts]{{\includegraphics[width=0.45\linewidth]{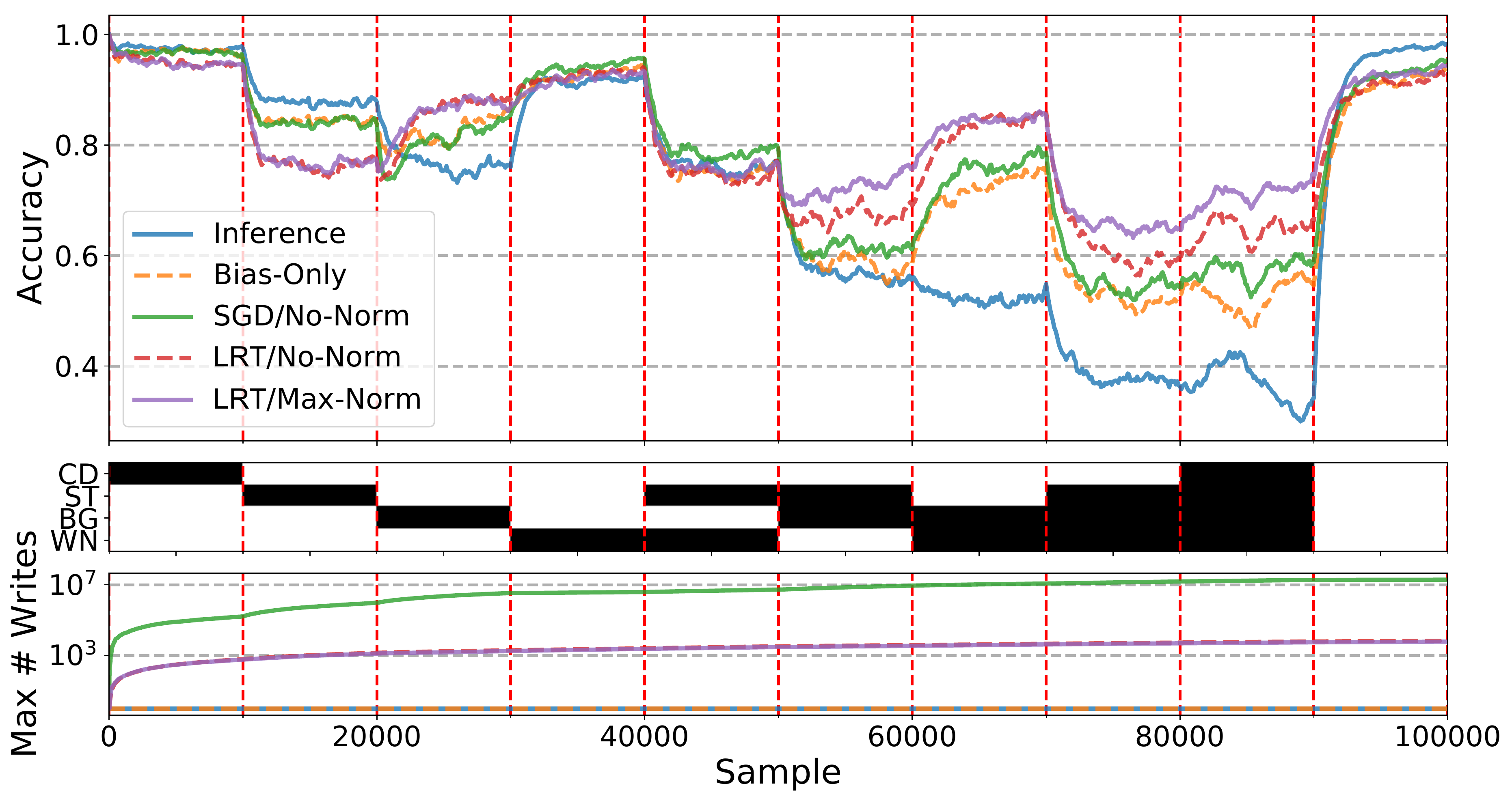} }} \\
    \subfloat[Analog / Gaussian Weight Drift]{{\includegraphics[width=0.45\linewidth]{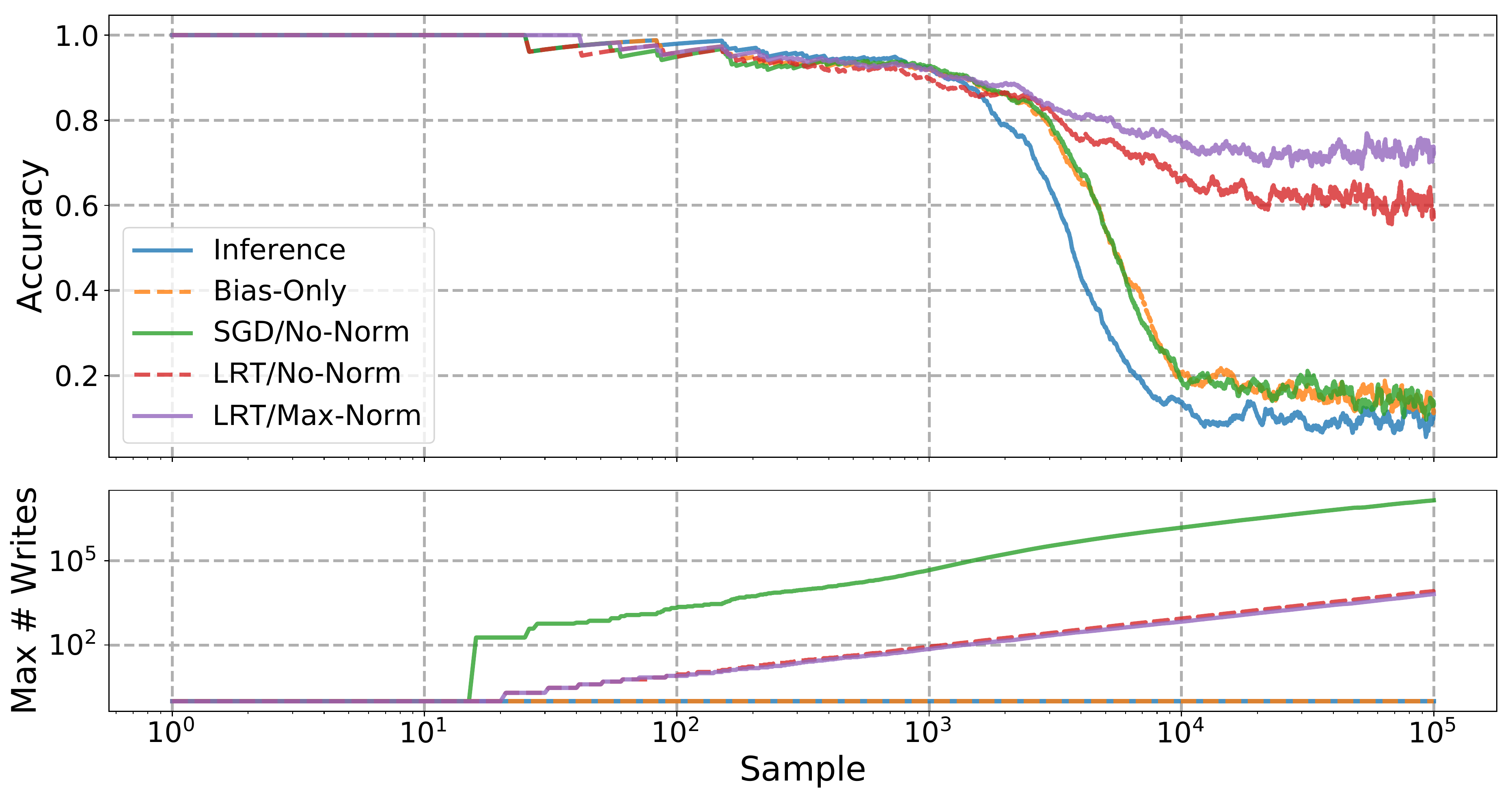} }}%
    \qquad
    \subfloat[Digital / Bit-Flip Weight Drift]{{\includegraphics[width=0.45\linewidth]{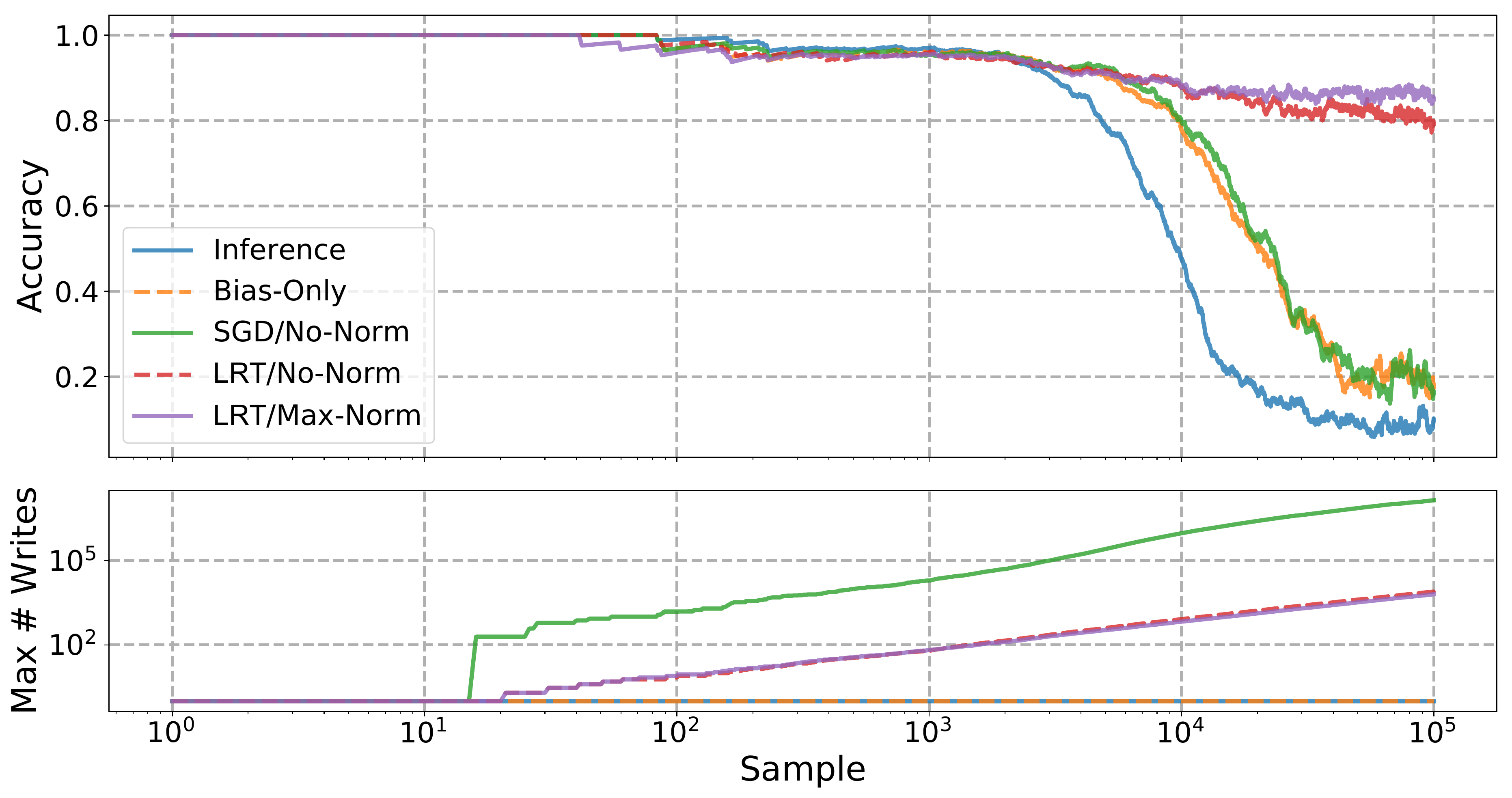} }}%
    \caption{Adaptation of various training schemes over four different training environments (a) to (d). In each training environment, the top plot shows the exponential moving averages (0.999) of the per-sample online accuracy of the five training schemes, while the bottom plot shows the maximum number of updates applied to any given convolution or fully-connected kernel memory cell. For the distribution shifts in (b), the enabled augmentations at each contiguous 10k samples is shown (CD = class distribution, ST = spatial transforms, BG = background gradients, WN = white noise).}
    \label{fig:adaptation-experiments}
\end{figure*}

\textbf{Quantization.}
The NN is quantized in both the forward and backward directions with uniform power-of-2 quantization, where the clipping ranges are fixed at the start of training\footnote{Future work might look into how to change these clipping ranges, but this is beyond the scope of this paper.}. Weights are quantized to 8 bits between -1 and 1, biases to 16 bits between -8 and 8, activations to 8 bits between 0 and 2, and gradients to 8 bits between -1 and 1. Both the weights $\mW$ and weight updates $\Delta \mW$ are quantized to the same LSB so that weights cannot be used for accumulation beyond the fixed quantization dynamic range. This is in contrast to using high bitwidth \citep{zhou2016dorefa, banner2018scalable} or floating point accumulators. See Appendix \ref{sec:hw} for more details on quantization.

\textbf{Gradient Max-Norming.}
State-of-the-art methods in training, such as Adam \citep{kingma2014adam}, use auxiliary memory per parameter to normalize the gradients. Unfortunately, we lack the memory budget to support these additional variables, especially if they must be updated every sample\footnote{LRT could potentially approximate Adam. LRT on $\va^2, \vdz^2, \va, \vdz$ allows for a low-rank approximation of the variance of the gradients, however, this is unlikely to work well because of numerical stability (e.g., estimated variances might be negative).}. Instead, we propose dividing each gradient tensor by the maximum absolute value of its elements. This stabilizes the range of gradients across samples. See Appendix \ref{sec:detail-grad-norm} for more details on gradient max-norming. In the experiments, we refer to this method as ``max-norm'' (opposite ``no-norm'').

\textbf{Streaming Batch Normalization.}
Batch normalization \citep{ioffe2015batch} is a powerful technique for improving training performance which has been suggested to work by smoothing the loss landscape \citep{santurkar2018does}. We hypothesize that this may be especially helpful when parameters are quantized as in our case. However, in the online setting, we receive samples one-at-a-time rather than in batches. We therefore propose a streaming batch norm that uses moving average statistics rather than batch statistics as described in detail in Appendix \ref{sec:detail-SBN}.

\section{Experiments}\label{sec:experiments}

\begin{table*}[htb]
\centering
\caption{Accuracy recovery beyond inference ($\%$, mean with standard deviation from 5 random seeds) between different algorithms (all with max-norm; effective batch size $B = 100$ if applicable), tested at different ranks ($r$), and learning rates ($\eta$). Optimal learning rates are bolded.}\label{tbl:imagenet-tl}
\begin{tabular}{lllllll}
\toprule
    & $\eta$ &                0.003 &                0.010 &           0.030 &                0.100 &            0.300 \\
Algorithm & $r$ &                      &                      &                 &                      &                  \\
\midrule
SGD & - &       $+0.3 \pm 0.2$ &       $+0.3 \pm 0.2$ &  $+0.3 \pm 0.2$ &  $\bm{+0.9 \pm 0.2}$ &   $-3.9 \pm 0.8$ \\
UORO & 1 &  $\bm{+0.4 \pm 0.2}$ &       $+0.3 \pm 0.4$ &  $-1.8 \pm 0.9$ &       $-7.6 \pm 1.6$ &  $-31.7 \pm 1.6$ \\
Biased LRT & 1 &       $+1.9 \pm 0.2$ &  $\bm{+5.8 \pm 1.0}$ &  $-3.4 \pm 1.0$ &      $-19.4 \pm 0.9$ &  $-40.7 \pm 1.1$ \\
    & 2 &       $+1.4 \pm 0.4$ &  $\bm{+6.5 \pm 0.7}$ &  $+6.3 \pm 0.6$ &       $-5.2 \pm 0.9$ &  $-36.3 \pm 0.9$ \\
    & 4 &       $+1.3 \pm 0.4$ &  $\bm{+6.5 \pm 0.7}$ &  $+5.2 \pm 0.8$ &       $-3.3 \pm 1.0$ &  $-33.8 \pm 0.8$ \\
    & 8 &       $+1.4 \pm 0.3$ &  $\bm{+5.6 \pm 0.8}$ &  $+4.3 \pm 0.9$ &       $-2.4 \pm 1.0$ &  $-32.8 \pm 0.9$ \\
Unbiased LRT & 1 &  $\bm{+0.3 \pm 0.2}$ &       $+0.3 \pm 0.2$ &  $-0.7 \pm 0.4$ &       $-2.7 \pm 1.7$ &  $-26.5 \pm 2.6$ \\
    & 2 &       $+0.3 \pm 0.2$ &       $+0.4 \pm 0.3$ &  $-0.1 \pm 0.4$ &  $\bm{+1.3 \pm 0.9}$ &  $-12.9 \pm 1.1$ \\
    & 4 &       $+0.4 \pm 0.2$ &       $+0.6 \pm 0.2$ &  $+1.9 \pm 0.3$ &  $\bm{+8.0 \pm 1.1}$ &   $-5.1 \pm 1.1$ \\
    & 8 &       $+0.4 \pm 0.2$ &       $+1.1 \pm 0.2$ &  $+3.3 \pm 0.7$ &  $\bm{+4.8 \pm 1.5}$ &  $-15.8 \pm 1.7$ \\
\bottomrule
\end{tabular}
\end{table*}

\subsection{Adaptation Experiments}\label{sec:adapt-exp}

To test the effectiveness of LRT, experiments are performed on a representative CNN with four $3\times 3$ convolution layers and two fully-connected layers. We generate ``offline'' and ``online'' datasets based on MNIST (see Section~\ref{sec:data}), including one in which the statistical distribution shifts every 10k images. We then optimize the hyperparameters of both an online SGD and rank-4 LRT model for fair comparison (see Appendix~\ref{sec:hyperparam}). To see the importance of different training techniques, we run several ablations in Section~\ref{sec:ablation}. Finally, we compare these different training schemes in different environments, meant to model real life. In these hypothetical scenarios, a model is first trained on the offline training set, and is then deployed to a number of devices at the edge that make supervised predictions (they make a prediction, then are told what the correct prediction would have been).




We present results on four hypothetical scenarios. First, a control case where both external/environment and internal/NVM drift statistics are exactly the same as during offline training. Second, a case where the input image statistical distribution shifts every 10k samples, selecting from augmentations such as spatial transforms and background gradients (see Section \ref{sec:data}). Third and fourth are cases where the NVM drifts from the programmed values, roughly modeling NVM memory degradation. In the third case, Gaussian noise is applied to the weights as if each weight was a single multi-level memory cell whose analog value drifted in a Brownian way. In the fourth case, random bit flips are applied as if each weight was represented by $b$ memory cells (see Appendix \ref{sec:data} for details). For each hypothetical scenario, we plot five different training schemes: pure quantized inference (no training), bias-only training, standard SGD training, LRT training, and LRT training with max-normed gradients. In SGD training and for training biases, parameters are updated at every step in an online fashion. These are seen as different colored curves in Figure \ref{fig:adaptation-experiments}.

Inference does best in the control case, but does poorly in adaptation experiments. SGD doesn't improve significantly on bias-only training, likely because SGD cannot accumulate gradients less than a weight LSB. LRT, on the other hand, shows significant improvement, especially after several thousand samples in the weight drift cases. Additionally, LRT shows about three orders of magnitude improvement compared to SGD in the worst case number of weight updates. Much of this reduction is due to the convolutions, where updates are applied at each pixel. However, reduction in fully-connected writes is still important because of potential energy savings. LRT/max-norm performs best in terms of accuracy across all environments and has similar weight update cost to LRT/no-norm.

\subsection{Ablations}\label{sec:ablation}

\begin{table*}[htb]
\caption{Importance of unbiased SVD. Accuracy is calculated from the last 500 samples of 10k samples trained from scratch. Mean and unbiased standard deviation are calculated from five runs of different random seeds.}
\label{tbl:unbiased-svd-ablation}
\begin{center}
\begin{tabular}{llll}
    {\bf Conv LRT} & {\bf FC LRT} & {\bf Accuracy (no-norm)} & {\bf Accuracy (max-norm)} \\ \hline
    Biased    & Biased    & $79.7\% \pm 1.1\%$ & $82.7\% \pm 1.3\%$ \\
    Biased    & Unbiased  & $83.0\% \pm 0.9\%$ & $82.4\% \pm 1.2\%$ \\
    Unbiased  & Biased    & $77.7\% \pm 1.5\%$ & $84.6\% \pm 2.0\%$ \\
    Unbiased  & Unbiased  & $81.0\% \pm 0.9\%$ & $83.6\% \pm 2.5\%$ \\
\end{tabular}
\end{center}
\end{table*}

\begin{table*}[htb]
\caption{Miscellaneous selected ablations. Accuracy is calculated from the last 500 samples of 10k samples trained from scratch. Mean and unbiased standard deviation are calculated from five runs of different random seeds.}
\label{tbl:misc-ablation}
\begin{center}
\begin{tabular}{lll}
    {\bf Modified Condition} & {\bf Accuracy (no-norm)} & {\bf Accuracy (max-norm)} \\ \hline
    baseline (no modifications)                 & $80.2\% \pm 1.0\%$ & $83.0\% \pm 1.1\%$ \\ \hline
    bias-only training                          & $51.8\% \pm 3.2\%$ & $68.6\% \pm 1.4\%$ \\
    no streaming batch norm                     & $68.2\% \pm 1.9\%$ & $81.8\% \pm 1.3\%$ \\
    no bias training                            & $81.3\% \pm 1.0\%$ & $83.0\% \pm 1.4\%$ \\
    $\kappa_\text{th} = 10^8$ instead of $100$  & $79.8\% \pm 1.4\%$ & $84.2\% \pm 1.4\%$ \\
\end{tabular}
\end{center}
\end{table*}

In Figure \ref{fig:rank-bw}, rank and weight bitwidth is swept for LRT with gradient max-norming. As expected, training accuracy improves with both higher LRT rank and bitwidth. In dense NVM applications, higher bitwidths may be achievable, allowing for corresponding reductions in the LRT rank and therefore, reductions in the auxiliary memory requirements.

\begin{figure}[htb]
\centering
\includegraphics[width=1.0\linewidth]{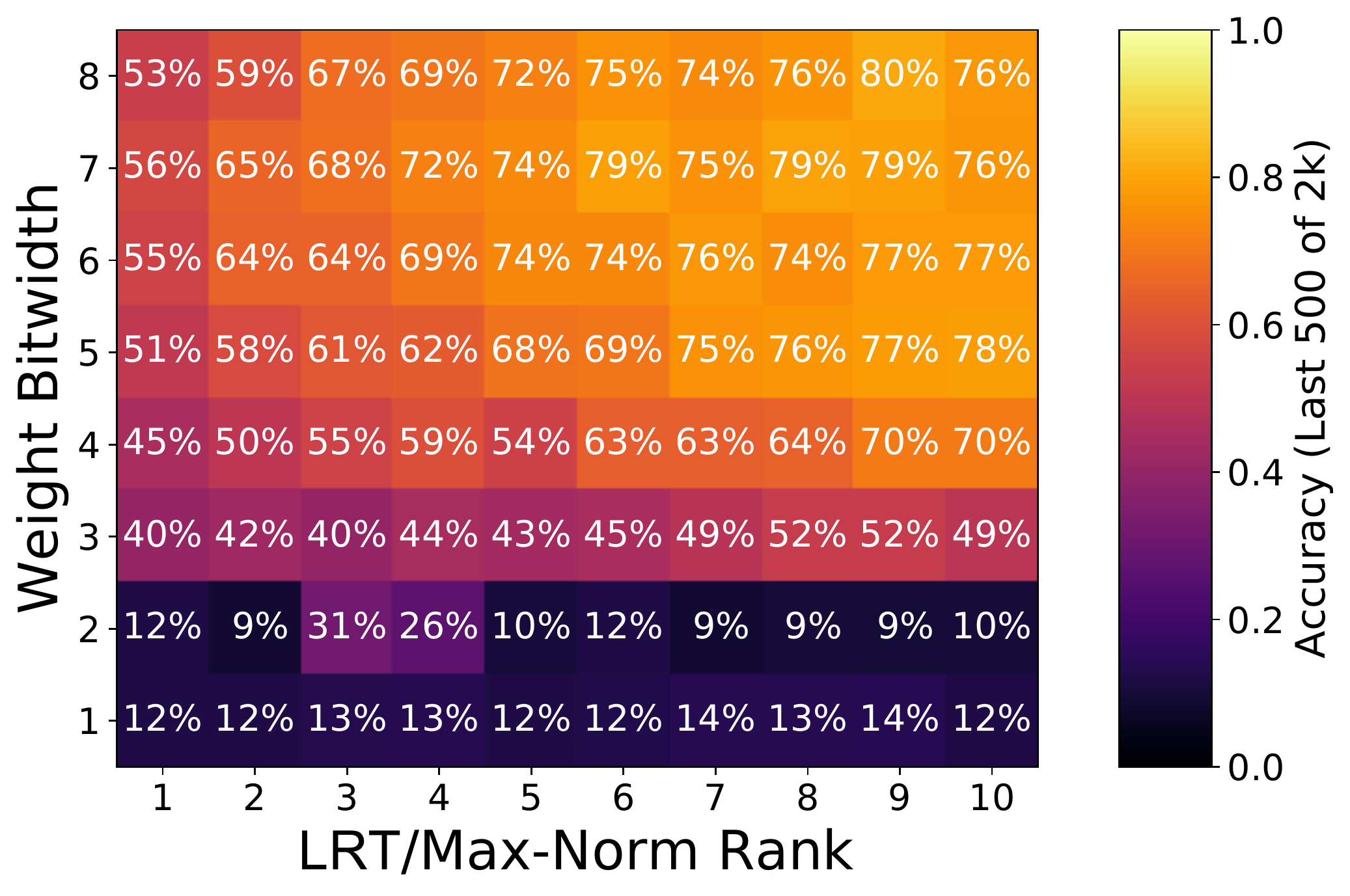}
\caption{Accuracy across a variety of LRT ranks and weight bitwidths, showing the expected trends of increasing accuracy with rank and bitwidth. Accuracy is calculated by averaging the accuracy on the last 500 samples from a 2k portion of the training data. For bitwidths of 1 and 2, mid-rise quantization is used (e.g., 1 bit quantizes values to -0.5 and 0.5 instead of -1 and 0).}
\label{fig:rank-bw}
\end{figure}

In Table \ref{tbl:unbiased-svd-ablation}, biased (zero-variance) and unbiased (low-variance) versions of LRT are compared. Accuracy improvements are generally seen moving from biased to unbiased LRT although the pattern differs between the no-norm and max-norm cases. In the no-norm case, a significant improvement is seen favoring unbiased LRT for fully-connected layers. In the max-norm case, the choice of biased or unbiased LRT has only a minor impact on accuracy. It might be expected that as the number of accumulated samples for a given batch increases, lower variance would be increasingly important at the expense of bias. For our network, this implies convolutions, which receive updates at every pixel of an output feature map, would preferentially have biased LRT, while the fully-connected layer would preferentially be unbiased. This hypothesis is supported by the no-norm experiments, but not by the max-norm experiments. 


In Table \ref{tbl:misc-ablation}, several ablations are performed on LRT with max-norm. Most notably, weight training is found to be extremely important for accuracy as bias-only training shows a $\approx 15-30\%$ accuracy hit depending on whether max-norming is used. Streaming batch norm is also found to be quite helpful, especially in the no-norm case.

Now, we explain the $\kappa_{th}$ ablation. In Section \ref{sec:ok-svd}, we found the SVD of a small matrix $\mC$ and its singular values $\sigma_1, \ldots, \sigma_q$. This allows us to easily find the condition number of $\mC$ as $\kappa(C) = \sigma_1 / \sigma_q$. We suspect high condition numbers provide relatively useless update information akin to noise, especially in the presence of $\mL,\mR$ quantization. Therefore, we prefer not to update $\mL,\mR$ on samples whose condition number exceeds threshold $\kappa_{th}$. We can avoid performing an actual SVD (saving computation) by noting that $\mC$ is often nearly diagonal, leading to the approximation $\kappa(\mC) \approx C_{1,1} / C_{q,q}$. Empirically, this rough heuristic works well to reduce computation load while having minor impact on accuracy. In Table \ref{tbl:misc-ablation}, $\kappa_{th} = 10^8$ does not appear to ubiquitously improve on the default $\kappa_{th} = 100$, despite being $\approx 2\times$ slower to compute.

\subsection{Transfer Learning and Algorithm Comparisons}\label{sec:algo-comp-exp}
To test the broader applicability of low rank training techniques, we run several experiments on ImageNet with ResNet-34 \citep{imagenet_cvpr09, he2016deep}, a potentially realistic target for dense NVM inference on-chip. For ImageNet-size images, updating the low-rank approximation at each pixel quickly becomes infeasible, both because of the single-threaded nature of the algorithm, and because of the increased variance of the estimate at larger batch sizes. Instead, we focus on training the final layer weights ($1000 \times 512$). ResNet-34 weights are initialized to those from \citet{paszke2017automatic} and the convolution layers are used to generate feature vectors for 10k ImageNet training images\footnote{The decision to use training data is deliberate, however experiments on out-of-sample images, such as \citet{recht2019imagenet} show similar behavior.}, which are quantized and fed to a one-layer quantized\footnote{Quantization ranges are chosen to optimize accuracy and are different from those in Section~\ref{sec:adapt-exp}.} neural network. To speed up experiments, the layer weights are initialized to the pretrain weights, modulated by random noise that causes inference top-1 accuracy to fall to $52.7\% \pm 0.9\%$. In Table~\ref{tbl:imagenet-tl}, we see that the unbiased LRT has the strongest recovery accuracies, although biased LRT also does quite well. The high-variance UORO and true SGD have weak or non-existent recoveries.

\section{Conclusion}\label{sec:conclusion}
We demonstrated the potential for LRT to solve the major challenges facing online training on NVM-based edge devices: low write density and low auxiliary memory.
LRT is a computationally-efficient, memory-light algorithm capable of decoupling batch size from auxiliary memory, allowing larger effective batch sizes, and consequently lower write densities.
Additionally, we noted that LRT may allow for training under severe weight quantization constraints as rudimentary gradient accumulations are handled by the $\mL, \mR$ matrices, which can have high bitwidths (as opposed to SGD, which may squash small gradients to 0).

We found expressions for when LRT might have better convergence properties.
Across a variety of online adaptation problems and a large-scale transfer learning demonstration, LRT was shown to match or exceed the performance of SGD while using a small fraction of the number of updates.

Finally, we conclude with speculations about more general applications of the LRT technique. Auxiliary memory minimization may be analogous to communication minimization in training strategies such as federated learning, where gradient compression is important. Therefore, LRT could be a valuable tool for local training of networks of devices that communicate training information to each other, without the use of a central server.





\newpage


\nocite{ling1986recursive}
\bibliography{main}
\bibliographystyle{mlsys2020}

\newpage
\appendix

\phantom{.}
\newpage
\section{Convex Convergence}\label{sec:cvx-converge}
In this section we will attempt to bound the regret (defined below) of an SGD algorithm using noisy LRT estimates $\tilde{\vg} = \vg + \bm{\varepsilon}$ in the convex setting, where $\vg$ are the true gradients and $\bm{\varepsilon}$ are the errors introduced by the low rank LRT approximation. Here, $\vg$ is a vector of size $N$ and can be thought of as a flattened/concatenated version of the gradient tensors (e.g., $N = n_i \cdot n_o$).

Our proof follows the proof in \citet{zinkevich2003online}. We define $\sF$ as the convex feasible set (valid settings for our weight tensors) and assume that $\sF$ is bounded with $D = \text{max}_{\vw,\vv\in \sF} ||\vw-\vv||$ being the maximum distance between two elements of $\sF$. Further, assume a batch $t$ of $B$ samples out of $T$ total batches corresponds to a loss landscape $f^t(\vw^t)$ that is strongly convex in weight parameters $\vw^t$, so there are positive constants $C \geq c > 0$ such that $c\mI \preceq \nabla^2 f^t(\vw^t) \preceq C\mI$ for all $t$ \cite{boyd2004convex}. We define regret as $R(T) = \sum_{t=1}^T f^t(\vw^t) - \sum_{t=1}^T f^t(\vw^{*})$ where $\vw^{*} = \text{argmin}_{\vw} \sum_{t=1}^T f^t(\vw)$ (i.e., it is an optimal offline minimizer of $f^1, \cdots, f^T$).


The gradients seen during SGD are $\vg^t = \nabla f^t(\vw^t)$ and we assume they are bounded by $G = \text{max}_{\vw \in \sF, t \in [1,T]} ||\nabla f^t(\vw)||$. We also assume errors are bounded by $\mathcal{E} = \text{max}_{t \in [1,T]} ||\bm{\varepsilon}^t||$. Therefore, $\text{max}_{t \in [1,T]} ||\tilde{\vg}^t|| \leq \text{max}_{t \in [1,T]} ||\vg^t||+||\bm{\varepsilon}^t|| \leq G + \mathcal{E}$ by the triangle inequality.

\begin{theorem}
Assume LRT-based SGD is applied with learning rate $\eta_t = 1/\sqrt{t}$. Then, under the additional constraint $\vg^t \cdot (\vw^t - \vw^{*}) - \frac{c}{2} ||\vw^t - \vw^{*}||_2^2 \leq \tilde{\vg}^t \cdot (\vw^t - \vw^{*})$, we have sublinear regret:

\begin{equation*}
    R(T) \leq \frac{D^2}{2} \sqrt{T} + (G + \mathcal{E})^2 \left(\sqrt{T} - 1/2\right)
\end{equation*}
\end{theorem}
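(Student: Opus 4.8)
The plan is to follow the projected online gradient descent argument of \citet{zinkevich2003online} for the iterates $\vw^{t+1} = P_{\sF}(\vw^t - \eta_t \tilde{\vg}^t)$, and to insert the strong-convexity penalty and the noise hypothesis at exactly the step where the true gradient would otherwise be uncontrollable. First I would expand the distance to $\vw^{*}$ after one projected step. Since $\vw^{*} \in \sF$ is a fixed point of the nonexpansive Euclidean projection $P_{\sF}$,
\[
    ||\vw^{t+1} - \vw^{*}||^2 \leq ||\vw^t - \vw^{*}||^2 - 2\eta_t \, \tilde{\vg}^t \cdot (\vw^t - \vw^{*}) + \eta_t^2 \, ||\tilde{\vg}^t||^2,
\]
which rearranges into $\tilde{\vg}^t \cdot (\vw^t - \vw^{*}) \leq \frac{1}{2\eta_t}\left(||\vw^t - \vw^{*}||^2 - ||\vw^{t+1} - \vw^{*}||^2\right) + \frac{\eta_t}{2}||\tilde{\vg}^t||^2$.

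Next I would relate the per-step regret to this inner product. Strong convexity with parameter $c$ gives $f^t(\vw^{*}) \geq f^t(\vw^t) + \vg^t \cdot (\vw^{*} - \vw^t) + \frac{c}{2}||\vw^t - \vw^{*}||^2$, equivalently $f^t(\vw^t) - f^t(\vw^{*}) \leq \vg^t \cdot (\vw^t - \vw^{*}) - \frac{c}{2}||\vw^t - \vw^{*}||^2$. This is precisely the quantity the theorem's additional hypothesis upper-bounds by $\tilde{\vg}^t \cdot (\vw^t - \vw^{*})$, so chaining the two inequalities produces $f^t(\vw^t) - f^t(\vw^{*}) \leq \frac{1}{2\eta_t}\left(||\vw^t - \vw^{*}||^2 - ||\vw^{t+1} - \vw^{*}||^2\right) + \frac{\eta_t}{2}||\tilde{\vg}^t||^2$. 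This is the conceptual heart of the argument: the noise constraint is engineered so that the intractable true-gradient-minus-curvature term is replaced by the noisy gradient the descent step actually takes.

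Then I would sum over $t = 1, \ldots, T$ and handle the two resulting sums separately. For the telescoping-distance sum I would apply Abel summation, rewriting it as $\frac{1}{2}\left[\frac{1}{\eta_1}||\vw^1 - \vw^{*}||^2 - \frac{1}{\eta_T}||\vw^{T+1} - \vw^{*}||^2 + \sum_{t=2}^{T} ||\vw^t - \vw^{*}||^2\left(\frac{1}{\eta_t} - \frac{1}{\eta_{t-1}}\right)\right]$; because $\eta_t = 1/\sqrt{t}$ makes $1/\eta_t = \sqrt{t}$ increasing, every coefficient is nonnegative, so dropping the nonpositive boundary term, bounding each $||\vw^t - \vw^{*}||^2 \leq D^2$, and telescoping $\sqrt{t} - \sqrt{t-1}$ collapses this to $\frac{D^2}{2}\sqrt{T}$. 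For the gradient sum I would use $||\tilde{\vg}^t|| \leq G + \mathcal{E}$ together with the integral estimate $\sum_{t=1}^{T} 1/\sqrt{t} \leq 2\sqrt{T} - 1$, giving $(G + \mathcal{E})^2\left(\sqrt{T} - 1/2\right)$. Adding the two bounds yields exactly the claimed regret.

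I expect the main obstacle to be bookkeeping rather than any deep step: getting the Abel-summation coefficients to come out nonnegative (which relies entirely on the monotonicity of $1/\eta_t$) and pinning the $\sum 1/\sqrt{t}$ estimate tightly enough to recover the exact constant $\sqrt{T} - 1/2$ rather than a looser $\sqrt{T}$. The one genuinely substantive move is recognizing that the hypothesis $\vg^t \cdot (\vw^t - \vw^{*}) - \frac{c}{2}||\vw^t - \vw^{*}||^2 \leq \tilde{\vg}^t \cdot (\vw^t - \vw^{*})$ is exactly what lets the strong-convexity lower bound cancel against a surrogate the projection step can control; without it the true gradient $\vg^t$ would survive into the telescoping sum and the argument would fail to close.
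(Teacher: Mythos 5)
Your proposal is correct and follows essentially the same route as the paper's own proof: the standard Zinkevich projected-gradient expansion, the strong-convexity bound chained with the hypothesized noise constraint to replace $\vg^t$ by $\tilde{\vg}^t$, the Abel-summation/telescoping bound $\frac{D^2}{2\eta_T}$ on the distance terms, and $\sum_{t=1}^T 1/\sqrt{t} \leq 2\sqrt{T}-1$ for the gradient terms. The only difference is the order in which you present the two key inequalities, which is immaterial.
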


\begin{proof}
From strong convexity $c\mI \preceq \nabla^2 f^t(\vw^t) \preceq C\mI$ for all $t$,

\begin{equation}
    f^t(\vw) + \vg^t \cdot (\vv - \vw) + \frac{c}{2} ||\vv - \vw||_2^2 \leq f^t(\vv) \;\;\;\; \text{for all }\vv
\end{equation}

In particular, if we consider $\vv = \vw^{*}$ and rearrange,

\begin{align}
    f^t(\vw) - f^t(\vw^{*}) &\leq \vg^t \cdot (\vw - \vw^{*}) - \frac{c}{2} ||\vw - \vw^{*}||_2^2 \nonumber \\
    f^t(\vw) - f^t(\vw^{*}) &\leq \tilde{\vg}^t \cdot (\vw^t - \vw^{*}) \label{eq:cvx-1}
\end{align}

Consider a gradient update $\vw^{t+1} = P_{\sF}(\vw^t - \eta_t \tilde{\vg}^t)$, where $P_{\sF}$ projects the update back to $\sF$. Then,

\begin{align}
    ||\vw^{t+1} - \vw^{*}||_2^2 &= ||P(\vw^t - \eta_t \tilde{\vg}^t) - \vw^{*}||_2^2 \nonumber \\
    &\leq ||\vw^t - \eta_t \tilde{\vg}^t - \vw^{*}||_2^2 \nonumber \\
    &= ||\vw^t - \vw^{*}||_2^2 - 2\eta_t (\vw^t - \vw^{*}) \cdot \tilde{\vg}^t + \nonumber \\
    &\;\;\;\; \eta_t^2 || \tilde{\vg}^t ||_2^2 \nonumber \\
    &\leq ||\vw^t - \vw^{*}||_2^2 - 2\eta_t (\vw^t - \vw^{*}) \cdot \tilde{\vg}^t + \nonumber \\
    &\;\;\;\; \eta_t^2 (G + \mathcal{E})^2 \nonumber \\
    \tilde{\vg}^t \cdot (\vw^t - \vw^{*}) &\leq \frac{1}{2\eta_t} \left( ||\vw^t - \vw^{*}||_2^2 - ||\vw^{t+1} - \vw^{*}||_2^2 \right) + \nonumber \\
    &\;\;\;\; \frac{\eta_t}{2} (G + \mathcal{E})^2 \label{eq:cvx-2}
\end{align}

From (\ref{eq:cvx-1}, \ref{eq:cvx-2}),

\begin{align}
    f^t(\vw^t) - f^t(\vw^{*}) &\leq \frac{1}{2\eta_t} \left( ||\vw^t - \vw^{*}||_2^2 - ||\vw^{t+1} - \vw^{*}||_2^2 \right) + \nonumber \\
    &\;\;\;\; \frac{\eta_t}{2} (G + \mathcal{E})^2 \label{eq:cvx-3}
\end{align}

We now bound the regret:

\begin{align}
    R(T) &= \sum\limits_{t=1}^T \left[ f^t(\vw^t) - f^t(\vw^{*}) \right] \nonumber \\
    &\leq \sum\limits_{t=1}^T \left[ \frac{1}{2\eta_t} \left( ||\vw^t - \vw^{*}||_2^2 - ||\vw^{t+1} - \vw^{*}||_2^2 \right) + \right. \nonumber \\
    &\;\;\;\; \phantom{\sum\limits_{t=1}^T} \left. \frac{\eta_t}{2} (G + \mathcal{E})^2 \right] \nonumber \\
    &= \frac{||\vw^1 - \vw^{*}||_2^2}{2 \eta_1} - \frac{||\vw^{T+1} - \vw^{*}||_2^2}{2 \eta_T} + \nonumber \\
    &\;\;\;\; \frac{1}{2} \sum\limits_{t=2}^T \left(\frac{1}{\eta_t} - \frac{1}{\eta_{t-1}}\right) ||\vw^t - \vw^{*}||_2^2 + \frac{(G + \mathcal{E})^2}{2} \sum\limits_{t=1}^T \eta_t \nonumber \\
    &\leq \frac{||\vw^1 - \vw^{*}||_2^2}{2 \eta_1} + \frac{1}{2} \sum\limits_{t=2}^T \left(\frac{1}{\eta_t} - \frac{1}{\eta_{t-1}}\right) ||\vw^t - \vw^{*}||_2^2 + \nonumber \\
    &\;\;\;\; \frac{(G + \mathcal{E})^2}{2} \sum\limits_{t=1}^T \eta_t \nonumber \\
    &\leq \frac{D^2}{2 \eta_1} + \frac{1}{2} \sum\limits_{t=2}^T \left(\frac{1}{\eta_t} - \frac{1}{\eta_{t-1}}\right) D^2 + \frac{(G + \mathcal{E})^2}{2} \sum\limits_{t=1}^T \eta_t \nonumber \\
    &= \frac{D^2}{2 \eta_T} + \frac{(G + \mathcal{E})^2}{2} \sum\limits_{t=1}^T \eta_t \label{eq:cvx-4}
\end{align}

If $\eta_t = 1/\sqrt{t}$, then $\sum_{t=1}^T \eta_t \leq 2 \sqrt{T} - 1$ \citep{zinkevich2003online}, so from (\ref{eq:cvx-4}),

\begin{equation}
    R(T) \leq \frac{D^2}{2} \sqrt{T} + (G + \mathcal{E})^2 \left(\sqrt{T} - 1/2\right)
\end{equation}

\end{proof}

This is a sublinear regret and therefore, average regret $R(T)/T$ is bounded above by 0 in the limit as $T \rightarrow \infty$. To achieve this result, we constrained $\vg^t \cdot (\vw^t - \vw^{*}) - \frac{c}{2} ||\vw^t - \vw^{*}||_2^2 \leq \tilde{\vg}^t \cdot (\vw^t - \vw^{*})$. We now examine sufficient conditions for this inequality to be satisfied.

\begin{align}
    \vg^t \cdot (\vw^t - \vw^{*}) - \frac{c}{2} ||\vw^t - \vw^{*}||_2^2 &\leq \tilde{\vg}^t \cdot (\vw^t - \vw^{*}) \nonumber \\
    \bm{\varepsilon}^t \cdot (\vw^{*} - \vw^t) &\leq \frac{c}{2} ||\vw^t - \vw^{*}||_2^2 \label{eq:cvx-eps}
\end{align}

Since $\bm{\varepsilon}^t \cdot (\vw^{*} - \vw^t) \leq ||\bm{\varepsilon}^t|| \cdot ||\vw^{*} - \vw^t||$ by Cauchy-Schwarz, it is sufficient for:

\begin{align}
    ||\bm{\varepsilon}^t|| \cdot ||\vw^{*} - \vw^t|| &\leq \frac{c}{2} ||\vw^t - \vw^{*}||_2^2 \nonumber \\
    ||\bm{\varepsilon}^t|| &\leq \frac{c}{2} ||\vw^t - \vw^{*}|| \label{eq:cvx-eps-cond}
\end{align}

\subsection{Considerations for Rank Deficient Hessians}\label{sec:rank-def}
In the preceding proof, we assumed $c > 0$. However, it is common for this to not hold. For example, in linear regression, where $c = \lambda_{min}(\mX \mX^\intercal)$ for sample input $\mX \in \mathbb{R}^{(n_i \times B)}$ \citep[Chapter~4.3]{haykin2014adaptive}, if $B < n_i$ then $c = 0$. We can modify (\ref{eq:cvx-eps-cond}) to handle this case. Let $\tilde{c}$ be the minimum non-zero Eigenvalue of $\mX \mX^\intercal$ and let $\tilde{\vw} \in \mathbb{R}^B$ represent $\vw \in \mathbb{R}^N$ in the Eigenbasis of $\mX \mX^\intercal$. Then (\ref{eq:cvx-eps-cond}) becomes:

\begin{align}
    ||\bm{\varepsilon}^t|| &\leq \frac{\tilde{c}}{2} ||\tilde{\vw}^t - \tilde{\vw}^{*}|| \label{eq:cvx-eps-cond-rank-def}
\end{align}

\subsection{Estimates for the LRT Error}\label{sec:lrt-error}
We can estimate the LRT error $||\bm{\varepsilon}^t||$ in both the biased and unbiased cases. For the \textbf{biased, zero-variance case}, we get rid of the lowest singular value (out of $q$ singular values) as we see each sample. Thus, at a given sample $i$, the error is $\sigma_q^{(i)} \cdot \mQ_{L,q}^{(i)} \cdot \mQ_{R,q}^{(i)}$ and the average squared error is $(1/N) \sigma_q^{(i)2}$. We can treat this as a per-element variance. If these smallest singular components are uncorrelated from sample to sample, then the variances add:

\begin{equation}
    \sigma_\varepsilon^2 \approx \frac{1}{N} \sum\limits_{i=1}^B \left(\sigma_q^{(i)}\right)^2 \label{eq:svd-var}
\end{equation}

For the \textbf{unbiased, minimum-variance case}, Theorem A.4 from \citet{pmlr-v97-benzing19a} states that the minimum variance is $s_1^2/k + s_2$ where $s_1 = \sum_{i=m}^q \sigma_i$, $s_2 = \sum_{i=m}^q \sigma_i^2$, and $k,m$ are as defined in Section~\ref{sec:ok-sigma}. Since $m$ is chosen to minimize variance, we can upper bound the variance by choosing $m = r$ and therefore $k=1$, $s_1 = \sigma_r + \sigma_q$, and $s_2 = \sigma_r^2 + \sigma_q^2$. Empirically, this tends to be a good approximation. Then, the average per-element variance added at sample $i$ is approximately $(2/N) \left( \sigma_r^{(i)} \sigma_q^{(i)} \right)$. Assuming errors between samples are uncorrelated, this leads to a total variance:

\begin{equation}
    \sigma_\varepsilon^2 \approx \frac{2}{N} \sum\limits_{i=1}^B \sigma_r^{(i)} \sigma_q^{(i)} \label{eq:lrt-var}
\end{equation}

For either case, $||\bm{\varepsilon}||^2 \approx N \sigma_\varepsilon^2$. For the $t$-th batch and $i$-th sample, we denote $\sigma_q^{(t,i)}$ as the $q$-th singular value. For simplicity, we focus on the biased, zero-variance case (the unbiased case is similar). From (\ref{eq:cvx-eps-cond}), an approximately sufficient condition for sublinear-regret convergence is:

\begin{equation}
    \sum\limits_{i=1}^B \left(\sigma_q^{(t,i)}\right)^2 \leq \frac{c^2}{4} ||\vw^t - \vw^{*}||^2 \label{eq:cvx-sigma-cond}
\end{equation}

\subsection{Discussion on Convergence}\label{sec:cvx-disc}
Equation (\ref{eq:cvx-sigma-cond}) suggests that as $\vw^t \rightarrow \vw^{*}$, the constraints for achieving sublinear-regret convergence become more difficult to maintain. However, in practice this may be highly problem-dependent as the $\sigma_q$ will also tend to decrease near optimal solutions. To get a better sense of the behavior of the left-hand side of (\ref{eq:cvx-sigma-cond}), suppose that:

\begin{align*}
    \sum_{i=1}^B \left(\sigma_q^{(t,i)}\right)^2 &\approx \sum_{i=q}^B \left(\sigma_i(\mG^t)\right)^2 \\
    &\leq \sum_{i=1}^B \left(\sigma_i(\mG^t)\right)^2 \\
    &= ||\mG^t||_F^2
\end{align*}

where $\mG^t = \nabla_{\mW^t} f^t(\mW^t) \in \mathbb{R}^{(n_o\times n_i)}$ are the matrix weight $\mW^t$ gradients at batch $t$ and $||\cdot||_F$ is a Frobenius norm. We therefore expect both the left (proportional to $||\mG^t||_F^2$) and the right (proportional to $||\vw^t - \vw^{*}||^2$) of (\ref{eq:cvx-sigma-cond}) to decrease during training as $\vw^t \rightarrow \vw^{*}$. This behavior is in fact what is seen in Figure~\ref{fig:lin-reg}(b). If achieving convergence is found to be difficult, (\ref{eq:cvx-sigma-cond}) provides some insight for convergence improvement methods.

One solution is to reduce batch size $B$ to satisfy the inequality as necessary. This minimizes the weight updates during more repetitive parts of training while allowing dense weight updates (possibly approaching standard SGD with small batch sizes) during more challenging parts of training.

Another solution is to reduce $\sigma_q$. One way to do this is to increase the rank $r$ so that the spectral energy of the updates are spread across more singular components. There may be alternate approaches based on conditioning the inputs to shape the distribution of singular values in a beneficial way.

A third method is to focus on $c$, the lower bound on curvature of the convex loss functions. Perhaps a technique such as weight regularization can increase $c$ by adding constant curvature in all Eigen-directions of the loss function Hessian (although this may also increase the LHS of (\ref{eq:cvx-sigma-cond})). Alternatively, perhaps low-curvature Eigen-directions are less important for loss minimization, allowing us to raise the $c$ that we effectively care about. This latter approach requires no particular action on our part, except the recognition that fast convergence may only be guaranteed for high-curvature directions. This is exemplified in Figure~\ref{fig:lin-reg}(b), where we can see biased LRT track the curve for $C$ more so than $c$.

Finally, we note that this analysis focuses solely on the errors introduced by a floating-point version of LRT. Quantization noise can add additional error into the $\bm{\varepsilon}^t$ term. We expect this to add a constant offset to the LHS of (\ref{eq:cvx-sigma-cond}). For a weight LSB $\Delta$, quantization noise has variance $\Delta^2/12$, so we desire:

\begin{equation}
    N \frac{\Delta^2}{12} + \sum\limits_{i=1}^B \left(\sigma_q^{(t,i)}\right)^2 \leq \frac{c^2}{4} ||\vw^t - \vw^{*}||^2 \label{eq:cvx-sigma-quant-cond}
\end{equation}

\section{Kronecker Sums in Neural Network Layers}\label{sec:ks-nn}

\subsection{Dense Layer}\label{sec:impl-dense}
A dense or fully-connected layer transforms an input $\va \in \mathbb{R}^{n_i \times 1}$ to an intermediate $\vz = \mW \cdot \va + \vb$ to an output $\vy = \sigma(\vz) \in \mathbb{R}^{n_o \times 1}$ where $\sigma$ is a non-linear activation function. Gradients of the loss function with respect to the weight parameters can be found as:

\begin{equation}
    \nabla_\mW \mathcal{L} = \underbrace{(\nabla_\vz \mathcal{L})}_{\vdz} \odot \underbrace{(\nabla_\mW \vz)}_{\va^\intercal} = \vdz \otimes \va
\end{equation}

which is exactly the per-sample Kronecker sum update we saw in linear regression. Thus, at every training sample, we can add $(\vdz^{(i)} \otimes \va^{(i)})$ to our low rank estimate with LRT.

\subsection{Convolutional Layer}\label{sec:impl-conv}
A convolutional layer transforms an input feature map $\tA \in \mathbb{R}^{h_{in} \times w_{in} \times c_{in}}$ to an intermediate feature map $\tZ = \tW_{kern} \ast \tA + \vb \in \mathbb{R}^{h_{out} \times w_{out} \times c_{out}}$ through a 2D convolution $\ast$ with weight kernel $\tW_{kern} \in \mathbb{R}^{c_{out} \times k_h \times k_w \times c_{in}}$. Then it computes an output feature map $y = \sigma(z)$ where $\sigma$ is a non-linear activation function.

Convolutions can be interpreted as matrix multiplications through the \verb|im2col| operation which converts the input feature map $\tA$ into a matrix $\mA_{col} \in \mathbb{R}^{(h_{out} w_{out}) \times (k_h k_w c_{in})}$ where the $i^\text{th}$ row is a flattened version of the sub-tensor of $a$ which is dotted with $\tW_{kern}$ to produce the $i^\text{th}$ pixel of the output feature map \citep{ren2015vectorization}. We can multiply $\mA_{col}$ by a flattened version of the kernel, $\mW \in \mathbb{R}^{c_{out} \times (k_h h_w c_{in})}$ to perform the $\tW_{kern} \ast \tA$ convolution operation with a matrix multiplication. Under the matrix multiplication interpretation, weight gradients can be represented as:

\begin{align}
    \nabla_\mW \mathcal{L} &= \underbrace{(\nabla_{\mZ_{col}} \mathcal{L})}_{\bm{dZ}_{col}^\intercal} \odot \underbrace{(\nabla_\mW \mZ)}_{\mA_{col}} \nonumber \\
    &= \sum_{i=1}^{h_{out} w_{out}} \bm{dZ}_{col,i}^\intercal \otimes \mA_{col,i}^\intercal
\end{align}

which is the same as $h_{out} w_{out}$ Kronecker sum updates. Thus, at every output pixel $j$ of every training sample $i$, we can add $(\bm{dZ}_{col,j}^{(i)\intercal} \otimes \mA_{col,j}^{(i)\intercal})$ to our low rank estimate with LRT.

Note that while we already save an impressive factor of $B/q$ in memory when computing gradients for the dense layer, we save a much larger factor of $B h_{out} w_{out} / q$ in memory when computing gradients for the convolution layers, making the low rank training technique even more crucial here.

However, some care must be taken when considering activation memory for convolutions. For compute-constrained edge devices, image dimensions may be small and result in minimal intermediate feature map memory requirements. However, if image dimensions grow substantially, activation memory could dominate compared to weight storage. Clever dataflow strategies may provide a way to reduce intermediate activation storage even when performing backpropagation\footnote{For example, one could compute just a sliding window of rows of every feature map, discarding earlier rows as later rows are computed, resulting in a square-root reduction of activation memory. To incorporate backpropagation, compute the forward pass once fully, then compute the forward pass again, as well as the backward pass using the sliding window approach in both directions.}. 


\section{Hardware Quantization Model}\label{sec:hw}
\begin{figure*}[htb]
\centering
\includegraphics[width=0.9\linewidth,trim=0 30 0 50, clip]{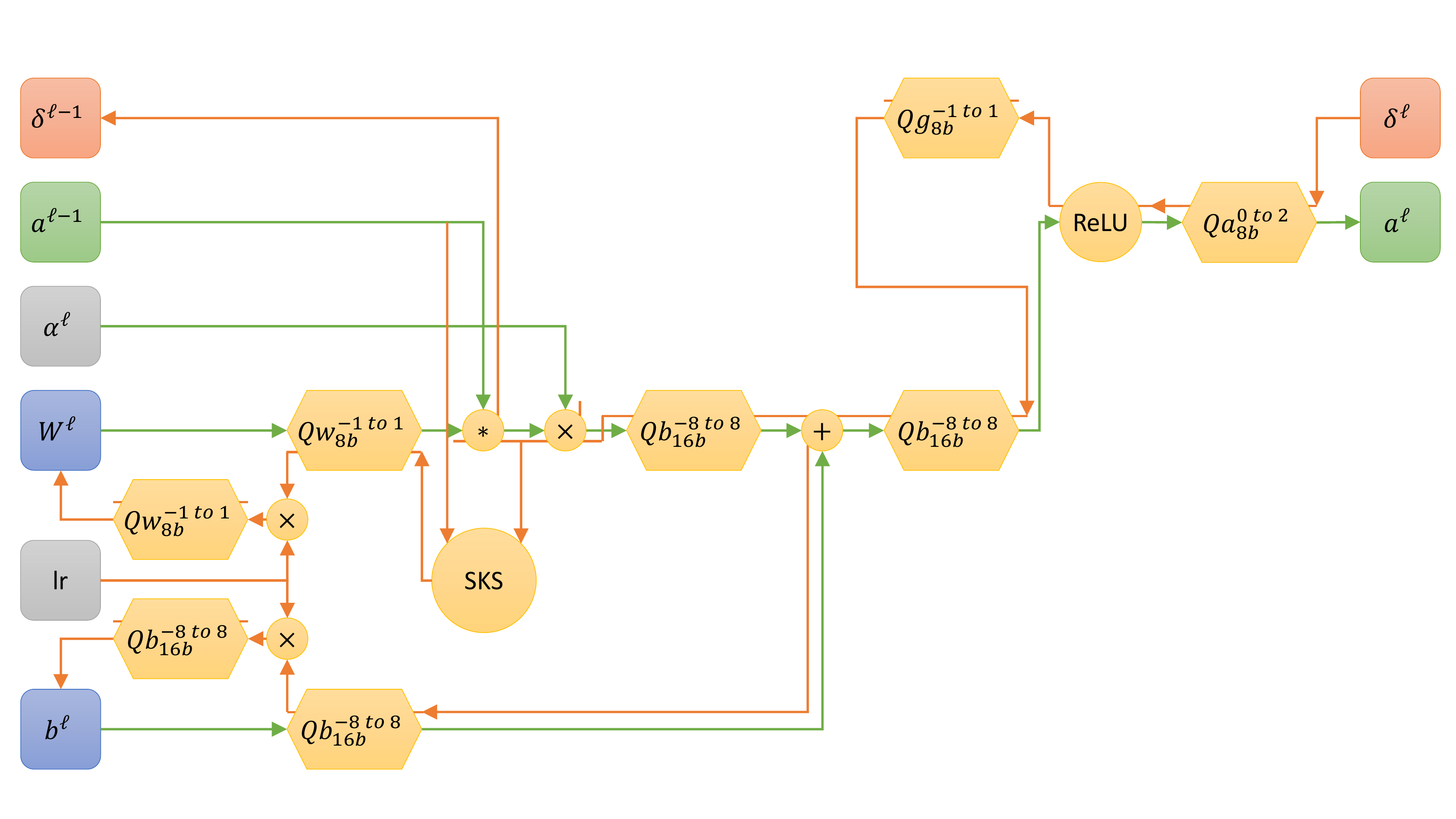}
\caption{Signal flow graph for a forward and backward quantized convolutional or dense layer.}
\label{fig:quant-model}
\end{figure*}

In a real device, operations are expected to be performed in fixed point arithmetic. Therefore, all of our training experiments are conducted with quantization in the loop. Our model for quantization is shown in Figure \ref{fig:quant-model}. The green arrows describe the forward computation. Ignoring quantization for a moment, we would have $\va^\ell = \text{ReLU}\left(\alpha^\ell \mW^\ell \ast \va^{\ell-1} + \vb^\ell\right)$, where $\ast$ can represent either a convolution or a matrix multiply depending on the layer type and $\alpha^\ell$ is the closest power-of-2 to He initialization \citep{he2015delving}. For quantization, we rely on four basic quantizers: $Qw, Qb, Qa, Qg$, which describe weight quantization, bias and intermediate accumulator quantization, activation quantization, and gradient quantization, respectively. All quantizers use fixed clipping ranges as depicted and quantize uniformly within those ranges to the specified bitwidths.

In the backward pass, follow the orange arrows from $\bm{\delta}^\ell$. Backpropagation follows standard backpropagation rules including using the straight-through estimator \citep{bengio2013estimating} for quantizer gradients. However, because we want to perform training on edge devices, these gradients must themselves be quantized. The first place this happens is after passing backward through the ReLU derivitive. The other two places are before feeding back into the network parameters $\mW^\ell, \vb^\ell$, so that $\mW^\ell, \vb^\ell$ cannot be used to accumulate values smaller than their LSB. Finally, instead of deriving $\Delta \mW^\ell$ from a backward pass through the $\ast$ operator, the LRT method is used.

LRT collects $\va^{\ell-1}, \vdz^\ell$ for many samples before computing the approximate $\Delta \tilde{\mW}^\ell$. It accumulates information in two low rank matrices $\mL, \mR$ which are themselves quantized to 16 bits with clipping ranges determined dynamically by the max absolute value of elements in each matrix. While LRT accumulates for $B$ samples, leading to a factor of $B$ reduction in the rate of updates to $\mW^\ell$, $\vb^\ell$ is updated at every sample. This is feasible in hardware because $\vb^\ell$ is small enough to be stored in more expensive forms of memory that have superior endurance and write power performance.

Because of the coarse weight LSB size, weight gradients may be consistently quantized to 0, preventing them from accumulating. To combat this, we only apply an update if a minimum update density $\rho_\text{min} = 0.01$ would be achieved, otherwise we continue accumulating samples in $\mL$ and $\mR$, which have much higher bitwidths. When an update does finally happen, the ``effective batch size'' will be a multiple of $B$ and we increase the learning rate correspondingly. In the literature, a linear scaling rule is suggested (see \citet{goyal2017accurate}), however we empirically find square-root scaling works better (see Appendix \ref{sec:hyperparam}).

\section{Gradient Max-Norming}\label{sec:detail-grad-norm}

\begin{figure}[htb]
\centering
\includegraphics[width=0.8\linewidth]{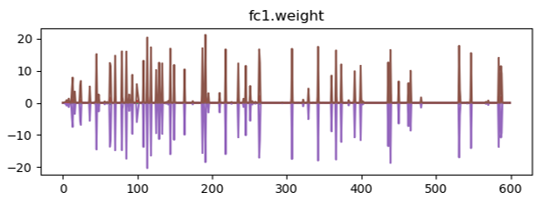}
\caption{Maximum magnitude of weight gradients versus training step for standard SGD on a CNN trained on MNIST.}
\label{fig:grad-mag}
\end{figure}

Figure \ref{fig:grad-mag} plots the magnitude of gradients seen in a weight tensor over training steps. One apparent property of these gradients is that they have a large dynamic range, making them difficult to quantize. Even when looking at just the spikes, they assume a wide range of magnitudes. One potential method of dealing with this dynamic range is to scale tensors so that their max absolute element is 1 (similar to a per-tensor AdaMax \citep{kingma2014adam} or Range Batch-Norm \citep{banner2018scalable} applied to gradients). Optimizers such as Adam, which normalize by gradient variance, provide a justification for why this sort of scaling might work well, although they work at a per-element rather than per-tensor level. We choose max-norming rather than variance-based norming because the former is easier computational and potentially more ammenable to quantization. However, a problem with the approach of normalizing tensors independently at each sample is that noise might be magnified during regions of quiet as seen in the Figure. What we therefore propose is normalization by the maximum of both the current max element and a moving average of the max element.

Explicitly, max-norm takes two parameters - a decay factor $\beta = 0.999$ and a gradient floor $\varepsilon = 10^{-4}$ and keeps two state variables - the number of evaluations $k := 0$ and the current maximum moving average $x_{mv} := \varepsilon$. Then for a given input $x$, max-norm modifies its internal state and returns $x_{norm}$:

\begin{align*}
    k &:= k + 1 \\
    x_{max} &:= \text{max}(|\vx|) + \varepsilon \\
    x_{mv} &:= \beta \cdot x_{mv} + (1 - \beta) \cdot x_{max} \\
    \tilde{x}_{mv} &:= \frac{x_{mv}}{1 - \beta^k} \\
    \vx_{norm} &:= \frac{\vx}{\text{max}(x_{max}, \tilde{x}_{mv})}
\end{align*}





\section{Streaming Batch Normalization}\label{sec:detail-SBN}
Standard batch normalization \citep{ioffe2015batch} normalizes a tensor $\tX$ along some axes, then applies a trainable affine transformation. For each slice $\mX$ of $\tX$ that is normalized independently:

\begin{equation*}
    \mY = \gamma \cdot \frac{\mX - \mu_{b}}{\sqrt{\sigma_b^2 + \varepsilon}} + \beta
\end{equation*}

where $\mu_b, \sigma_b$ are mean and standard deviation statistics of a minibatch and $\gamma, \beta$ are trainable affine transformation parameters.

In our case, we do not have the memory to hold a batch of samples at a time and must compute $\mu_b, \sigma_b$ in an online fashion. To see how this works, suppose we knew the statistics of each sample $\mu_i, \sigma_i$ for $i = 1\ldots B$ in a batch of $B$ samples. For simplicity, assume the $i^\text{th}$ sample is a vector $\mX_{i,:} \in \mathbb{R}^n$ containing elements $X_{i,j}$. Then:

\begin{align}
    \mu_b &= \frac{1}{B} \sum\limits_{i=1}^B \mu_i \label{eqn:batch-mu} \\
    \sigma_b^2 &= \frac{1}{B} \sum\limits_{i=1}^B \frac{1}{n} \sum\limits_{j=1}^n X_{i,j}^2 - \mu_b^2 \nonumber \\
    &= \frac{1}{B} \sum\limits_{i=1}^B \left(\sigma_i^2 + \mu_i^2\right) - \mu_b^2 \nonumber \\
    &\neq \frac{1}{B} \sum\limits_{i=1}^B \sigma_i^2 \label{eqn:batch-sigma}
\end{align}

In other words, the batch variance is not equal to the average of the sample variances. However, if we keep track of the sum-of-square values of samples $\sigma_i^2 + \mu_i^2$, then we can compute $\sigma_b^2$ as in (\ref{eqn:batch-sigma}). We keep track of two state variables: $\mu_s, sq_s$ which we update as $\mu_s := \mu_s + \mu_i$ and $sq_s := sq_s + \sigma_i^2 + \mu_i^2$ for each sample $i$. After $B$ samples, we divide both state variables by $B$ and apply (\ref{eqn:batch-mu}, \ref{eqn:batch-sigma}) to get the desired batch statistics. Unfortunately, in an online setting, all samples prior to the last one in a given batch will only see statistics generated from a portion of the batch, resulting in noisier estimates of $\mu_b, \sigma_b$.

In \textit{streaming} batch norm, we alter the above formula slightly. Notice that in online training, only the most recently viewed sample is used for training, so there is no reason to weight different samples of a given batch equally. Therefore we can use an exponential moving average instead of a true average to track $\mu_s, sq_s$. Specifically, let:

\begin{align*}
    \mu_s &:= \eta \cdot \mu_s + (1 - \eta) \cdot \mu_i \\
    sq_s &:= \eta \cdot sq_s + (1 - \eta) \cdot (\sigma_i^2 + \mu_i^2)
\end{align*}

If we set $\eta = 1 - 1/B$, a weighting of $1/B$ is seen on the current sample, just as in standard averages with a batch of size $B$, but now all samples receive similarly clean batch statistic estimates, not just the last few samples in a batch.



\section{Online Dataset}\label{sec:data}
\begin{figure*}[htb]
    \centering
    \subfloat[Spatial Transforms]{{\includegraphics[width=0.28\linewidth]{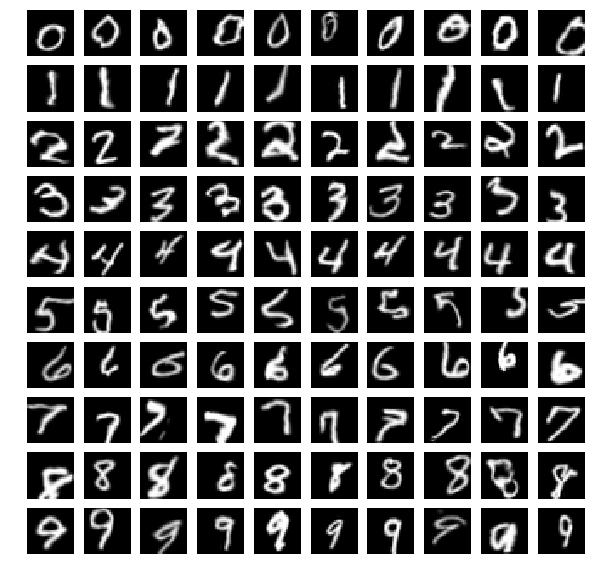} }}
    \qquad
    \subfloat[Background Grads]{{\includegraphics[width=0.28\linewidth]{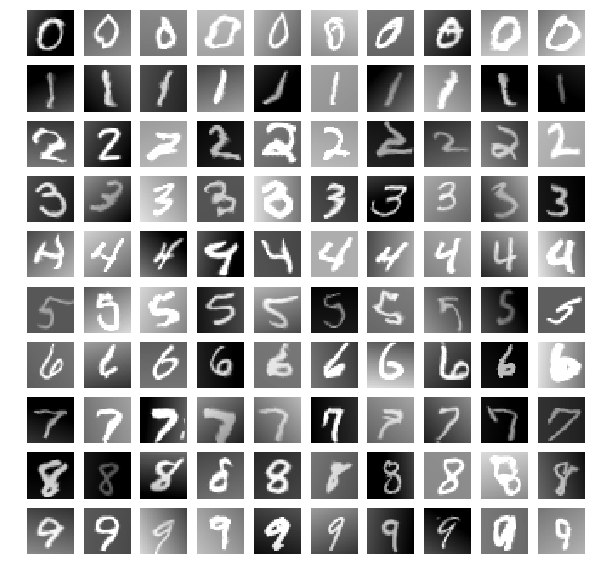} }}%
    \qquad
    \subfloat[White Noise]{{\includegraphics[width=0.28\linewidth]{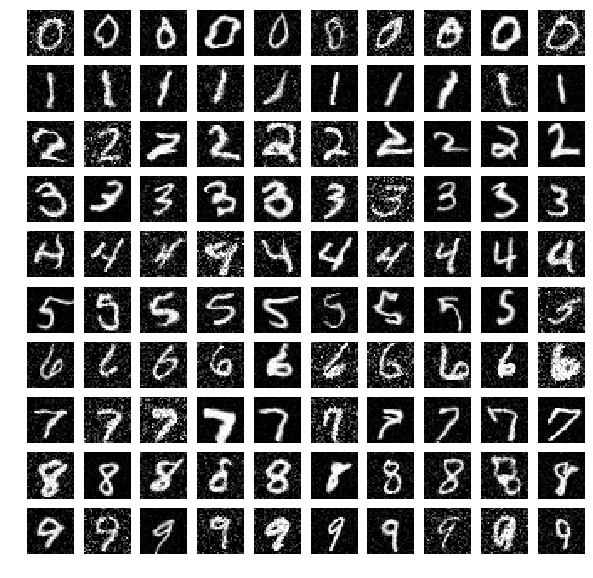} }}%
    \caption{Samples of different types of distribution shift augmentations.}
    \label{fig:ds-aug}
\end{figure*}

For our experiments, we construct a dataset comprising an offline training, validation, and test set, as well as an online training set. Specifically, we start with the standard MNIST dataset of \citet{lecun1998gradient} and split the 60k training images into partitions of size 9k, 1k, and 50k. Elastic transforms \citep{simard2003best, elastic-code} are used to augment each of these partitions to 50k offline training samples, 10k offline validation samples, and 100k online training samples, respectively. Elastic transforms are also applied to the 10k MNIST test images to generate the offline test samples.

The source images for the 100k online training samples are randomly drawn with replacement, so there is a certain amount of data leakage in that an online algorithm may be graded on an image that has been generated from the same image a previous sample it has trained on has been generated from. This is intentional and is meant to mimic a real-life scenario where a deployed device is likely to see a restrictive and repetitive set of training samples. Our experiments include comparisons to standard SGD to show that LRT's improvement is not merely due to overfitting the source images.

From the online training set, we also generate a ``distribution shift'' dataset by applying unique additional augmentations to every contiguous 10k samples of the 100k online training samples. Four types of augmentations are explored. Class distribution clustering biases training samples belonging to similar classes to have similar indices. For example, the first thousand images may be primarily ``0''s and ``3''s, whereas the next thousand might have many ``5''s. Spatial transforms rotate, scale, and shift images by random amounts. Background gradients both scale the contrast of the images and apply black-white gradients across the image. Finally, white noise is random Gaussian noise added to each pixel. Figure \ref{fig:ds-aug} shows some representative examples of what these augmentations look like. The augmentations are meant to mimic different external environments an edge devices might need to adapt to.

In addition to distribution shift for testing adaptation, we also look at internal statistical shift of weights in two ways - analog and digital. For analog weight drift, we apply independent additive Gaussian noise to each weight every $d = 10$ steps with $\sigma = \sigma_0 / \sqrt{1M/d}$ where $\sigma_0 = 10$ and re-clip the weights between -1 and 1. This can be interpreted as each cell having a Gaussian cumulative error with $\sigma = \sigma_0$ after 1M steps. For digital weight drift, we apply independent binary random flips to the weight matrix bits every $d$ steps with probability $p = p_0 / (1M/d)$ where $p_0 = 10$. This can be interpreted as each cell flipping an average of $p_0$ times over 1M steps. Note that in real life, $\sigma_0, p_0$ depend on a host of issues such as the environmental conditions of the device (temperature, humidity, etc), as well as the rate of seeing training samples.

\newpage
\section{Hyperparameter Selection}\label{sec:hyperparam}
In order to compare standard SGD with the LRT approach, we sweep the learning rates of both to optimize accuracy. In Figure \ref{fig:hyperparam-selection}, we compare accuracies across a range of learning rates for four different cases: SGD or LRT with or without max-norming gradients. Optimal accuracies are found when learning rate is around 0.01 for all cases. For most experiments, 8b weights, activations, and gradients, and 16b biases are used. Experiments similar to those in Section \ref{sec:ablation} are used to select some of the hyperparameters related to the LRT method in particular. In most experiments, rank-4 LRT with batch sizes of 10 (for convolution layers) or 100 (for fully-connected layers) are used. Additional details can be found in the supplemental code.

\begin{figure}[htb]
\centering
\includegraphics[width=0.8\linewidth]{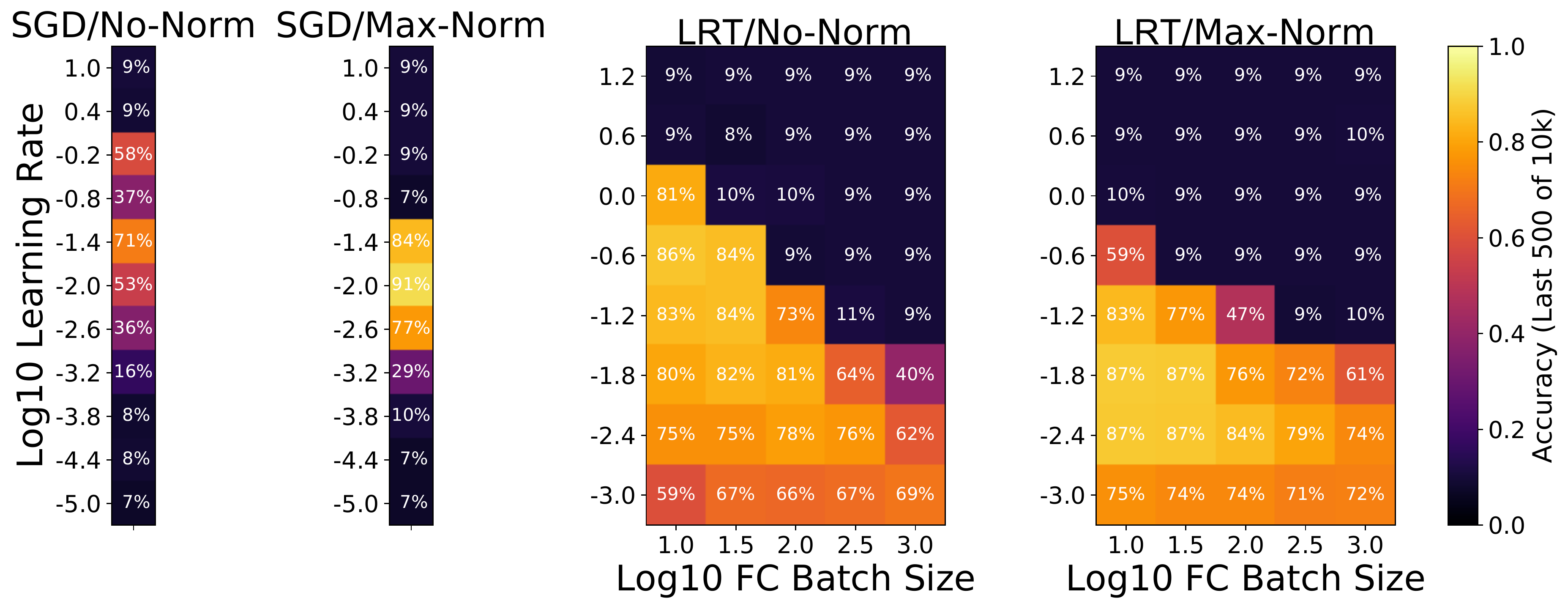}
\caption{The left two heat maps are used to select the base / standard SGD learning rate. The right two heat maps are used to select the LRT learning rate using the optimal SGD learning rate for bias training from the previous sweeps. For the LRT sweeps, the learning rate is scaled proportional to the square-root of the batch size $B$. This results in an approximately constant optimal learning rate across batch size, especially for the max-norm case. Accuracy is reported averaged over the last 500 samples from a 10k portion of the online training set, trained from scratch.}
\label{fig:hyperparam-selection}
\end{figure}


\end{document}